\begin{document}
	
\hyphenpenalty = 10000
\tolerance= 2000

\title{Learning of Generalized Low-Rank Models: A Greedy Approach}
\author{Quanming Yao \quad\quad James T. Kwok \\
	Department of Computer Science and Engineering\\
	Hong Kong University of Science and Technology \\
	Hong Kong\\
	\{qyaoaa, jamesk\}@cse.ust.hk}

\pdfinfo{
	/Title (Learning of Generalized Low-Rank Models: A Greedy Approach)
	/Author (Quanming Yao, James T.Kwok) }

\maketitle

\newcommand{\fix}{\marginpar{FIX}}
\newcommand{\new}{\marginpar{NEW}}

\newcommand{\rank}[1]{\text{rank}(#1)}
\newcommand{\st}{\;\text{s.t.}\;}
\newcommand{\FN}[1]{\|#1\|_F}
\newcommand{\SO}[1]{P_{\Omega}(#1)}
\def\t{\theta}
\newcommand{\R}{\mathbb{R}}
\renewcommand{\vec}[1]{\text{vec}(#1)}
\newcommand{\Tr}[1]{\text{Tr}(#1) }
\newcommand{\SP}[1]{||#1||_1}
\newcommand{\SQ}[1]{\left\| #1 \right\|_2}
\newcommand{\NN}[1]{\| #1 \|_*}

\newtheorem{theorem}{Theorem}
\newtheorem{lemma}[theorem]{Lemma}
\newtheorem{proposition}[theorem]{Proposition}
\newtheorem{corollary}[theorem]{Corollary}
\newtheorem{remark}{Remark}
\newtheorem{definition}{Definition}

\maketitle

\begin{abstract}
	Learning of low-rank matrices is fundamental to many machine learning applications.  A
	state-of-the-art algorithm
	is the 
	rank-one matrix pursuit (R1MP). 
	However, it can only be used in matrix completion problems with  the square loss. In this
	paper, we develop a more flexible greedy algorithm for generalized
	low-rank models whose optimization objective can be smooth or nonsmooth,
	general convex or
	strongly convex.
	The proposed algorithm has low per-iteration time complexity and fast convergence rate.
	Experimental results 
	show that it 
	is much faster
	than the state-of-the-art,
	with comparable or even better prediction performance.
\end{abstract}


\section{Introduction}

In many machine learning problems, the data samples can be naturally represented as low-rank
matrices. 
For example, in  recommender systems, 
the ratings matrix is low-rank
as users (and items) tend to form groups.
The prediction of unknown ratings is then 
a low-rank matrix completion problem
\cite{candes2009exact}.
In social network analysis,
the network can be represented by a matrix with
entries representing similarities between 
node pairs.
Unknown links are treated as missing values and predicted as in matrix completion \cite{chiang2014prediction}.
Low-rank matrix learning also have applications in 
image and video processing \cite{candes2011robust},
multitask learning \cite{argyriou2006multi}, multilabel learning 
\cite{tai2012multilabel},
and robust matrix factorization \cite{eriksson2012efficient}.

The low-rank matrix optimization problem is NP-hard
\cite{recht2010guaranteed},
and direct minimization is difficult.  To
alleviate this problem, one common approach is to factorize the target $m\times n$ matrix $X$ as a product of two
low-rank matrices $U$ and $V$,
where $U \in \R^{m \times r}$ and $V \in \R^{n \times r}$ with $r \ll \min\{m,n\}$.
Gradient descent and alternating minimization are often used for
optimization \cite{srebro2004maximum,eriksson2012efficient,wen2012solving}.  
However, as the objective is not jointly convex in $U$ and $V$, this approach can suffer from slow convergence \cite{hsieh2014nuclear}.

Another approach is to replace the 
matrix rank 
by the nuclear norm (i.e., sum of  its
singular values).
It is known that 
the nuclear norm 
is the tightest convex envelope of  the matrix rank 
\cite{candes2009exact}.
The resulting optimization problem is convex, and 
popular convex optimization solvers
such as the proximal gradient algorithm \cite{beck2009fast} 
and Frank-Wolfe algorithm \cite{jaggi2013revisiting} can be used.
However,
though convergence properties can be guaranteed, singular value decomposition 
(SVD) is required in each iteration
to generate the next iterate.
This can be prohibitively expensive when the target matrix is large.
Moreover,
nuclear norm regularization often leads to biased estimation.
Compared to factorization approaches,
the obtained rank can be much higher and
the prediction performance 
is inferior 
\cite{mazumder2010spectral}. 

Recently, greedy algorithms have been explored for low-rank optimization
\cite{shalev2011large,wang2014rank}.
The idea is similar to orthogonal matching pursuit (OMP) \cite{pati1993orthogonal} in sparse coding.
For example, the state-of-the-art in matrix completion is
the rank-one matrix pursuit (R1MP) algorithm \cite{wang2014rank}.  
In each iteration, it
performs an efficient rank-one SVD on a sparse matrix, and
then greedily adds a rank-one matrix to the matrix estimate. 
Unlike
other algorithms
which typically require a lot of iterations,
it only 
takes $r$ iterations 
to obtain a rank-$r$ solution.
Its prediction performance is also comparable or even
better than others.

However, R1MP  is only designed
for matrix completion with the square loss. As recently discussed in 
\cite{udell2014generalized}, 
different loss functions may be required in different learning scenarios.
For example,
in link prediction,
the presence or absence of a link is naturally represented by a binary variable, and
the logistic loss is thus more appropriate.
In robust  matrix learning applications, 
the $\ell_1$ loss or Huber loss can be used to reduce sensitivity to outliers
\cite{candes2011robust}.
While computationally R1MP can be used with these loss functions, its convergence analysis is
tightly based
on OMP (and thus the square loss), and cannot be easily extended.

This motivates us to develop more general greedy algorithms that can be used in a
wider range of low-rank matrix learning scenarios. In particular,
we consider low-rank matrix optimization problems of the form
\begin{align}
\min_{X}  
f(X) \;:\; \rank{X} \le r,
\label{eq:mdl}
\end{align}
where $r$ is the target rank, and the objective
$f$ can be smooth or nonsmooth,
(general) convex or strongly convex.
The proposed algorithm is an extension of R1MP, and can be reduced to R1MP when $f$ is the square loss.
In general, when $f$ is convex and Lipschitz-smooth,
convergence is guaranteed
with a 
rate  of
$O(1/T)$. 
When $f$ is strongly convex, this is improved to 
a linear rate.
When $f$
is nonsmooth, we obtain
a $O(1/\sqrt{T})$ rate for (general) convex objectives and
$O(\log(T)/T)$ for strongly convex objectives.
Experiments on large-scale data sets 
demonstrate that the proposed algorithms are much faster than the state-of-the-art,  
while achieving comparable or even better prediction performance.

\noindent
\textbf{Notation}:
The transpose of vector / matrix is denoted by the superscript $\ ^{\top}$.
For matrix
$A=[A_{ij}] \in \R^{m \times n}$
(without loss of generality, we assume that $m \le n$),
its Frobenius norm is $\FN{A} = \sqrt{\sum_{i,j} A_{ij}^2}$, 
$\ell_1$ norm
is 
$\SP{X}= \sum_{i,j} |X_{ij}|$,
nuclear norm is $\NN{A} = \sum_i \sigma_i(A)$, where $\sigma_i(A)$'s are
the singular values, and
$\sigma_{\max}(A)$ is its largest singular value.
For two vectors $x,y$, the inner product $\langle x, y\rangle= \sum_{i} x_i y_i$;
whereas 
for two matrices $A$ and $B$, 
$\langle A, B\rangle = \sum_{i,j} A_{ij} B_{ij}$.
For 
a smooth function
$f$, $\nabla f$ denotes its gradient. When $f$ is
convex but nonsmooth, $g \in \{u \,|\, f(y) \ge f(x) + \langle u, x - y \rangle \}$ is
its subgradient at $x$.
Moreover,
given $\Omega \in \{0,1\}^{m \times n}$,
$[\SO{A}]_{ij} = A_{ij}$ if $\Omega_{ij} = 1$,
and 0 otherwise.


\section{Review: Rank-One Matrix Pursuit}

The rank-one matrix pursuit (R1MP) algorithm \cite{wang2014rank} is designed for
matrix completion 
\cite{candes2009exact}.
Given a partially observed $m \times n$ matrix $O =[O_{ij}]$, indices of the observed
entries are contained in the matrix $\Omega \in \{0,1\}^{m \times n}$, where $\Omega_{ij} =1$
if $O_{ij}$ is observed and 0 otherwise.
The goal is to find a low-rank matrix that is most similar to $O$  at the observed entries.
Mathematically, this is formulated as the following optimization problem:
\begin{equation} \label{eq:mc} 
\min_{X}  
\sum_{(i,j)\in\Omega} (X_{ij} - O_{ij})^2
\;:\;
\rank{X} \le r,
\end{equation}
where $r$ is the target rank.
Note that the square loss has to be used in R1MP.

The key observation is that if $X$ has rank $r$, it can be written as the sum of $r$ rank-one matrices, i.e.,
$X = \sum_{i = 1}^r \theta_i u_i v_i^{\top}$,
where $\theta_i\in\R$ and $\SQ{u_i} = \SQ{v_i} = 1$.
To solve (\ref{eq:mc}),
R1MP 
starts with an empty estimate.
At the $t$th iteration,
the $(u_t, v_t)$ pair  that is most correlated with the current residual
$R_t = \SO{O - X_{t-1}}$
is greedily
added. It can be easily shown that 
this $(u_t, v_t)$ pair  
are the leading left and right singular vectors of
$R_t$, and 
can be efficiently obtained from the rank-one SVD of
$R_t$.
After adding this new $u_t v_t^\top$ basis matrix, all coefficients 
of the current
basis 
can be updated as
\begin{equation} \label{eq:r1mprefine1}
\!\! (\theta_1,\dots,\theta_t) \leftarrow
\arg\min_{\theta_1,\dots,\theta_t} \;
\left\|P_\Omega\left(\sum_{i = 1}^t \theta_i u_i v_i^{\top} - O \right)\right\|_F^2\!\!\!.\!\!\!
\end{equation} 
Because of the use of the square loss, this
is a simple least-squares regression problem with closed-form solution. 

To save computation,
R1MP also has an economic variant. This only updates the 
combination coefficients of the current estimate and the rank-one update matrix as:
\begin{eqnarray}
\!\!\! (\mu,\rho) \!\leftarrow\! \arg\min_{\mu,\rho}  \;
\left\|P_\Omega\left(\mu \sum_{i = 1}^{t - 1} \theta_i u_i v_i^{\top} \!+\! \rho u_t v_t^{\top} \!- \!
O\right)\right\|_F^2\!\!\!.
\label{eq:r1mprefine2}
\end{eqnarray} 
The whole procedure is shown in Algorithm~\ref{alg:r1mp}.

\begin{algorithm}[ht]
	\caption{R1MP
		\protect\cite{wang2014rank}.}
	\begin{algorithmic}[1]
		\STATE{{\bf Initialize:} $X_0 = 0$;}
		\FOR{$t = 1, \dots, T$}
		\STATE{$R_t = \SO{O - X_{t - 1}}$};
		\STATE{$[u_t, s_t, v_t] = \text{rank1SVD}(R_t)$};
		\STATE{update coefficients}
		using \eqref{eq:r1mprefine1}
		(standard version), or
		\eqref{eq:r1mprefine2} (economic version); 
		\STATE $X_t = \sum_{i = 1}^t \theta_i u_i v_i^{\top}$;
		\ENDFOR
		\RETURN $X_T$.
	\end{algorithmic}
	\label{alg:r1mp}
\end{algorithm}

Note that each R1MP iteration is 
computationally
inexpensive. Moreover, as the matrix's rank is increased by one in
each iteration, 
only $r$ iterations are needed in order to obtain a rank-$r$ solution.
It can also be shown that the 
residual's norm 
decreases at a linear rate, i.e., $\FN{R_t}^2\leq
\gamma^{t-1}\FN{\SO{O}}^2$ for some $\gamma\in (0,1)$.


\section{Low-Rank Matrix Learning with Smooth Objectives}
\label{sec:smooth}

Though 
R1MP is
scalable, it
can only be used for matrix completion with the square loss.
In this Section, we extend R1MP to problems with more general, smooth 
objectives. Specifically, we only assume that the objective $f$ is convex and
$L$-Lipschitz smooth.
This will be further extended to nonsmooth objectives in Section~\ref{sec:nonsmooth}.

\begin{definition}
	$f$ is $L$-Lipschitz smooth 
	if $f(X) \le f(Y) + \left\langle X - Y, \nabla f(Y) \right\rangle + \frac{L}{2} \FN{X - Y}^2$ for any $X, Y$.
\end{definition}


\subsection{Proposed Algorithm}
\label{sec:algo}

Let the matrix iterate 
at the $t$th iteration
be
$X_{t - 1}$.
We follow the gradient direction $\nabla f(X_{t - 1})$ of the objective $f$, and find the rank-one matrix $M$
that is most correlated with $\nabla f(X_{t - 1})$:
\begin{equation} \label{eq:M}
\max_{M} \; \langle M, \nabla f(X_{t - 1}) \rangle \;:\; \rank{M} = 1, \FN{M} = 1. 
\end{equation} 

Similar to \cite{wang2014rank}, 
its optimal solution 
is given by 
$u_t v_t^{\top}$, where
$(u_t, v_t)$ 
are the leading left and right 
singular vectors 
of $\nabla f(X_{t - 1})$.
We then set the coefficient $\theta_t$ for
this new rank-one update matrix to $- s_t/L$, where $s_t$ is the 
singular value
corresponding  to
$(u_t, v_t)$.
Optionally, 
all the
coefficients 
$\theta_1,\dots,\theta_t$ 
can be refined
as
\begin{align}
(\theta_1,\dots,\theta_t) \leftarrow
\arg\min_{\theta_1,\dots,\theta_t} \;
f \left( \sum_{i = 1}^t \theta_i u_i v_i^{\top} \right).
\label{eq:refine1}
\end{align}

As in R1MP,
an economic variant is to update
the coefficients as
$[\mu\theta_1,\dots, \mu\theta_{t-1}, \rho]$, where $\mu$ and $\rho$ are obtained as
\begin{eqnarray}
(\mu, \rho) \leftarrow
\arg\min_{\mu, \rho} \;
f\left( \mu \sum_{i = 1}^{t - 1} \theta_i u_i v_i^{\top}
+ \rho \, u_t v_t^{\top} \right).
\label{eq:refine2}
\end{eqnarray}

The whole procedure, which will be called
``greedy low-rank learning'' (GLRL),
is shown in Algorithm~\ref{alg:smooth}. Its economic variant will be called EGLRL. 
Obviously, on matrix completion problems with the square loss, Algorithm~\ref{alg:smooth} reduces to R1MP.

\begin{algorithm}[H]
	\caption{GLRL for low-rank matrix learning with smooth convex objective $f$.}
	\begin{algorithmic}[1]
		\STATE{{\bf Initialize:} $X_0 = 0$;}
		\FOR{$t = 1, \dots, T$}
		\STATE{$[u_t, s_t, v_t] = \text{rank1SVD}(\nabla f(X_{t-1}))$;}
		\STATE{$X_{t} = X_{t - 1} - \frac{s_t}{L} u_t v_t^{\top}$;}
		\STATE{(optional:) refine coefficients
			using \eqref{eq:refine1}
			(standard version) or
			\eqref{eq:refine2} (economic version);}
		\ENDFOR
		\RETURN $X_T$.
	\end{algorithmic}
	\label{alg:smooth}
\end{algorithm}

Note that \eqref{eq:refine1}, 
\eqref{eq:refine2} are smooth minimization problems 
(with $\leq r$ and 2 variables, respectively). 
As the target matrix is low-rank, 
$r$ should be small and thus
\eqref{eq:refine1},
\eqref{eq:refine2} can be
solved
inexpensively.
In the experiments, we use the 
popular
limited-memory BGFS (L-BFGS) solver \cite{nocedal2006numerical}.
Empirically,
fewer than five
L-BFGS
iterations are needed.
Preliminary experiments show that using more iterations does not improve performance.

Unlike R1MP, note that 
the coefficient refinement at step~5 is optional. 
Convergence results in Theorems~\ref{the:rate:strcvx} and \ref{the:rate:smtwc} 
below
still hold
even when 
this step is not performed.
However, as will be illustrated in Section~\ref{sec:link},
coefficient refinement 
is always beneficial 
in practice.
It results in a larger reduction of the objective in each iteration, and thus a better
rank-$r$
model after running for $r$ iterations.


\subsection{Convergence}
\label{sec:conv}

The analysis of R1MP is based on orthogonal matching pursuit \cite{pati1993orthogonal}.
This requires the condition $\frac{1}{2}\FN{\nabla f(X)}^2 = f(X)$, 
which only holds when $f$ is the square loss.
In contrast, our 
analysis 
for Algorithm~\ref{alg:smooth}
here 
is novel 
and
can be used for any 
Lipschitz-smooth
$f$.

The following Proposition shows that the objective
is decreasing in each iteration. Because of the lack of space, all the proofs will be omitted.

\begin{proposition} \label{pr:descent}
	If $f$ is $L$-Lipschitz smooth,
	\begin{align*}
	f(X_t) \le f(X_{t-1}) 
	- \frac{\gamma_{t - 1}^2}{2 L} \FN{\nabla f(X_{t-1})}^2, 
	\end{align*}
	where
	\begin{align}
	\gamma_{t - 1} = \frac{\sigma_{\max}(\nabla f(X_{t - 1}))}{\FN{\nabla f(X_{t - 1})}} \in
	\left[\frac{1}{\sqrt{m}}, 1\right].
	\label{eq:gamma}
	\end{align}
\end{proposition}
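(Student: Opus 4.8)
The plan is to establish the per-iteration descent by applying the $L$-Lipschitz smoothness inequality with $Y = X_{t-1}$ and $X = X_t = X_{t-1} - \frac{s_t}{L} u_t v_t^\top$, and then to separately verify the claimed bounds on $\gamma_{t-1}$. Since smoothness gives us an upper bound on $f(X_t)$ in terms of $f(X_{t-1})$, the gradient, and $\FN{X_t - X_{t-1}}^2$, the task reduces to computing $\langle X_t - X_{t-1}, \nabla f(X_{t-1})\rangle$ and $\FN{X_t - X_{t-1}}^2$ explicitly.

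First I would substitute $X_t - X_{t-1} = -\frac{s_t}{L} u_t v_t^\top$ into the definition, giving
\begin{align*}
f(X_t) \le f(X_{t-1}) - \frac{s_t}{L} \langle u_t v_t^\top, \nabla f(X_{t-1})\rangle + \frac{L}{2}\cdot\frac{s_t^2}{L^2}\FN{u_t v_t^\top}^2.
\end{align*}
Here $\FN{u_t v_t^\top} = \SQ{u_t}\SQ{v_t} = 1$, so the last term is $\frac{s_t^2}{2L}$. For the inner product term, recall that $(u_t, v_t)$ are the leading singular vectors of $\nabla f(X_{t-1})$ with singular value $s_t = \sigma_{\max}(\nabla f(X_{t-1}))$; hence $\langle u_t v_t^\top, \nabla f(X_{t-1})\rangle = u_t^\top \nabla f(X_{t-1}) v_t = s_t$. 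Plugging in yields $f(X_t) \le f(X_{t-1}) - \frac{s_t^2}{L} + \frac{s_t^2}{2L} = f(X_{t-1}) - \frac{s_t^2}{2L}$. Finally, since $s_t = \sigma_{\max}(\nabla f(X_{t-1})) = \gamma_{t-1}\FN{\nabla f(X_{t-1})}$ by the definition \eqref{eq:gamma}, we get $s_t^2 = \gamma_{t-1}^2 \FN{\nabla f(X_{t-1})}^2$, which is exactly the claimed inequality.

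For the bound $\gamma_{t-1} \in [\frac{1}{\sqrt{m}}, 1]$, I would use the standard relationship between the spectral and Frobenius norms of an $m\times n$ matrix $A$ (with $m \le n$): writing $\FN{A}^2 = \sum_i \sigma_i(A)^2$ over the at most $m$ nonzero singular values, we have $\sigma_{\max}(A)^2 \le \FN{A}^2 \le m\,\sigma_{\max}(A)^2$. The first inequality gives $\gamma_{t-1} \le 1$ and the second gives $\gamma_{t-1} \ge 1/\sqrt{m}$; one should note this is vacuous (or needs the convention $\gamma_{t-1} = 1$) in the degenerate case $\nabla f(X_{t-1}) = 0$, but then the algorithm has already reached a stationary point and the descent inequality holds trivially.

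The computation is essentially routine, so there is no serious obstacle; the only point requiring care is the identity $\langle u_t v_t^\top, \nabla f(X_{t-1})\rangle = s_t$, which relies on $(u_t, v_t)$ being the \emph{leading} singular vector pair (so the optimal value of \eqref{eq:M} is $\sigma_{\max}$, not merely some singular value), and on the fact that the optimal coefficient choice $\theta_t = -s_t/L$ is precisely the minimizer of the quadratic upper bound $-\frac{s_t}{L}\theta + \frac{L\theta^2}{2}$ in disguise — this is what makes the two $s_t^2/L$ terms combine favorably. Everything else is bookkeeping with norms of rank-one matrices.
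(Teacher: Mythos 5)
Your proposal is correct and follows essentially the same route as the paper: apply $L$-Lipschitz smoothness to the update $X_{t-1} - \frac{s_t}{L} u_t v_t^{\top}$, use $\langle u_t v_t^{\top}, \nabla f(X_{t-1})\rangle = s_t = \sigma_{\max}(\nabla f(X_{t-1}))$ to combine the linear and quadratic terms into $-\frac{s_t^2}{2L}$, and bound $\gamma_{t-1}$ via $\sigma_{\max}(A)^2 \le \FN{A}^2 \le m\,\sigma_{\max}(A)^2$. The one point you omit is that in Algorithm~\ref{alg:smooth} the iterate $X_t$ may additionally be refined via \eqref{eq:refine1} or \eqref{eq:refine2} (step~5); your argument bounds $f$ at the unrefined point $\bar{X}_t = X_{t-1} - \frac{s_t}{L}u_t v_t^{\top}$, and the paper closes this with one extra line: since the refinement minimizes $f$ over a set of coefficients containing those of $\bar{X}_t$ (and in practice is warm-started at them), it satisfies $f(X_t) \le f(\bar{X}_t)$, so the descent bound is preserved. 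Adding that sentence makes your proof cover the Proposition as stated for the algorithm's actual iterates; otherwise the argument, including your handling of the degenerate case $\nabla f(X_{t-1}) = 0$, is fine.
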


If $f$ is strongly convex,
a linear convergence rate can be obtained.

\begin{definition}
	$f$ is $\mu$-strongly convex 
	if $f(X) \ge f(Y) + \left\langle X - Y, \nabla f(Y) \right\rangle + \frac{\mu}{2} \FN{X - Y}^2$
	for any $X, Y$.
	\label{def:strcvx}
\end{definition}

\begin{theorem}
	\label{the:rate:strcvx}
	Let $X_*$ be the optimal solution of \eqref{eq:mdl}.
	If $f$ is $\mu$-strongly convex and $L$-Lipschitz smooth, 
	then
	\begin{align*}
	f(X_T) - f(X_*) \le \left( 1 - \frac{d_1^2 \mu }{L} \right)^T
	\left[ f(X_0) - f(X_*) \right],
	\end{align*}
	where $d_1 = \min_{t = 1}^T \gamma_t$.
\end{theorem}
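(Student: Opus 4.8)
The plan is to reduce the statement to the one-step decrease already recorded in Proposition~\ref{pr:descent}, by combining it with a gradient-domination (Polyak--\L ojasiewicz-type) inequality extracted from strong convexity, and then iterating the resulting contraction $T$ times.

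First I would establish that $\mu$-strong convexity forces
$\FN{\nabla f(X)}^2 \ge 2\mu\bigl(f(X) - f(X_*)\bigr)$
for every $X$. Indeed, Definition~\ref{def:strcvx}, anchored at $X_*$ with the gradient taken at $X$, gives
$f(X_*) \ge f(X) + \langle X_* - X, \nabla f(X)\rangle + \tfrac{\mu}{2}\FN{X_* - X}^2$.
The right-hand side is a convex quadratic in the displacement $X_* - X$, minimized when this displacement equals $-\tfrac{1}{\mu}\nabla f(X)$, hence bounded below by $f(X) - \tfrac{1}{2\mu}\FN{\nabla f(X)}^2$. Thus $f(X_*) \ge f(X) - \tfrac{1}{2\mu}\FN{\nabla f(X)}^2$, which rearranges to the claimed bound. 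Note this step uses only that $X_*$ is \emph{some} fixed matrix, so the rank constraint in~\eqref{eq:mdl} plays no role and no stationarity of $\nabla f(X_*)$ is needed.

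Next I would substitute this into Proposition~\ref{pr:descent} at $X_{t-1}$: subtracting $f(X_*)$ from both sides of the proposition and applying the Step~1 bound,
\[
f(X_t) - f(X_*) \le f(X_{t-1}) - f(X_*) - \frac{\gamma_{t-1}^2}{2L}\FN{\nabla f(X_{t-1})}^2 \le \Bigl(1 - \frac{\gamma_{t-1}^2\mu}{L}\Bigr)\bigl(f(X_{t-1}) - f(X_*)\bigr).
\]
Since $\mu \le L$ (the strong-convexity constant never exceeds the smoothness constant, as seen by subtracting the two defining inequalities) and $\gamma_{t-1}\in[1/\sqrt{m},1]$ by~\eqref{eq:gamma}, every factor $1 - \gamma_{t-1}^2\mu/L$ lies in $[0,1)$, so the recursion is a genuine contraction. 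Bounding each occurring $\gamma_{t-1}$ below by $d_1$ and chaining the inequality for $t=1,\dots,T$ (with $f(X_0) - f(X_*)\ge 0$, since $X_0 = 0$ is feasible) telescopes to $f(X_T) - f(X_*) \le (1 - d_1^2\mu/L)^T\bigl(f(X_0) - f(X_*)\bigr)$, which is the claim.

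I do not expect a serious obstacle here — this is essentially the classical linear-rate argument for gradient descent, with Proposition~\ref{pr:descent} playing the role of the descent lemma and the factor $\gamma_{t-1}^2$ (the one genuinely new ingredient) already handled upstream. The two points that need a line of care are: (i) deriving the gradient-domination bound with the rank constraint present, which is why I insert the \emph{feasible} point $X_*$ directly into the strong-convexity inequality rather than invoking any optimality condition; and (ii) the optional coefficient-refinement step in Algorithm~\ref{alg:smooth}, for which I would remark that refinement can only decrease $f(X_t)$ further, so the one-step estimate of Proposition~\ref{pr:descent}, and hence the entire chain, is preserved whether or not it is performed.
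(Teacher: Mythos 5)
Your proposal is correct and follows essentially the same route as the paper: derive the gradient-domination bound $\FN{\nabla f(X_{t-1})}^2 \ge 2\mu\left[f(X_{t-1}) - f(X_*)\right]$ from $\mu$-strong convexity (the paper's Lemma~\ref{lem:app1} with $Y = X_*$), combine it with Proposition~\ref{pr:descent}, bound $\gamma_{t-1} \ge d_1$, and chain over $t = 1,\dots,T$. Your observation that only feasibility of $X_*$ is needed (not $\nabla f(X_*) = 0$, which the paper invokes but does not actually require) is a minor tightening rather than a different argument.
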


If $f$ is only (general) convex,
the following shows that Algorithm~\ref{alg:smooth} converges at a slower 
$O(1/T)$
rate.

\begin{theorem} \label{the:rate:smtwc}
	If $f$ is (general) convex and $L$-Lipschitz smooth, then
	\begin{align*}
	f(X_T) - f(X_*)
	\le \frac{2 d_2^2 L \left[f(X_0) - f(X_*)\right]}
	{d_1^2 T \left[f(X_0) - f(X_*)\right] + 2 d_2^2 L},
	\end{align*}
	where $d_2 = \max_{t = 1}^T \FN{X_t - X_*}$.
\end{theorem}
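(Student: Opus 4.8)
The plan is to combine the per-iteration descent guarantee of Proposition~\ref{pr:descent} with convexity, turning the recursion into one on the optimality gap $\delta_t = f(X_t) - f(X_*)$. First I would lower-bound $\FN{\nabla f(X_{t-1})}$ in terms of $\delta_{t-1}$. Since $f$ is convex, $f(X_*) \ge f(X_{t-1}) + \langle X_* - X_{t-1}, \nabla f(X_{t-1})\rangle$, so by Cauchy--Schwarz $\delta_{t-1} = f(X_{t-1}) - f(X_*) \le \langle X_{t-1} - X_*, \nabla f(X_{t-1})\rangle \le \FN{X_{t-1} - X_*}\,\FN{\nabla f(X_{t-1})} \le d_2 \FN{\nabla f(X_{t-1})}$, hence $\FN{\nabla f(X_{t-1})}^2 \ge \delta_{t-1}^2 / d_2^2$. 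Plugging this into Proposition~\ref{pr:descent} and using $\gamma_{t-1} \ge d_1$ gives $\delta_t \le \delta_{t-1} - \frac{d_1^2}{2 L d_2^2}\,\delta_{t-1}^2$.

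Next I would solve this quadratic recursion. The standard trick is to divide through by $\delta_t \delta_{t-1}$ (both positive, since the objective is strictly decreasing unless the gradient vanishes, in which case $\delta_t = 0$ and the bound is trivial): from $\delta_t \le \delta_{t-1} - c\,\delta_{t-1}^2$ with $c = d_1^2/(2 L d_2^2)$, and using $\delta_t \le \delta_{t-1}$, we get
\begin{align*}
\frac{1}{\delta_t} - \frac{1}{\delta_{t-1}} = \frac{\delta_{t-1} - \delta_t}{\delta_t \delta_{t-1}} \ge \frac{c\,\delta_{t-1}^2}{\delta_t \delta_{t-1}} = \frac{c\,\delta_{t-1}}{\delta_t} \ge c.
\end{align*}
Telescoping from $1$ to $T$ yields $\frac{1}{\delta_T} \ge \frac{1}{\delta_0} + cT$, i.e. $\delta_T \le \frac{\delta_0}{1 + cT\,\delta_0}$. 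Substituting $c = d_1^2/(2 L d_2^2)$ and $\delta_0 = f(X_0) - f(X_*)$ and clearing denominators gives exactly the stated bound $\frac{2 d_2^2 L\,\delta_0}{d_1^2 T\,\delta_0 + 2 d_2^2 L}$.

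The main obstacle, and the place to be careful, is justifying the division steps: I must handle the degenerate case where $\nabla f(X_{t-1}) = 0$ (so $X_{t-1}$ is already optimal and all subsequent $\delta$'s are zero, making the inequality hold trivially), and I must confirm that the $\gamma_{t-1} \ge d_1$ and $\FN{X_{t-1} - X_*} \le d_2$ substitutions are used in the correct direction inside each inequality. A secondary subtlety is that the refinement steps \eqref{eq:refine1}/\eqref{eq:refine2} can only further decrease $f$, so Proposition~\ref{pr:descent} still upper-bounds $f(X_t)$ regardless of whether step~5 is executed; this is what makes the theorem hold for both GLRL and EGLRL. Everything else is the routine algebra of unrolling a $1/T$-type recursion.
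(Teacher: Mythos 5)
Your argument is correct and follows essentially the same route as the paper: convexity plus Cauchy--Schwarz and the bound $\FN{X_{t-1}-X_*}\le d_2$ (which the paper justifies via Proposition~\ref{pr:upbndx}) turn Proposition~\ref{pr:descent} into the recursion $\delta_t \le \delta_{t-1} - \frac{d_1^2}{2Ld_2^2}\delta_{t-1}^2$. The only difference is that you unroll this recursion by hand (dividing by $\delta_t\delta_{t-1}$ and telescoping), whereas the paper invokes Lemma~B.2 of Shalev-Shwartz et al.\ to obtain the same $O(1/T)$ bound, so your version is just slightly more self-contained.
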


The square loss 
in (\ref{eq:mc})
is only general convex and $1$-Lipschitz smooth.
From Theorem~\ref{the:rate:smtwc}, one would expect GLRL to only have a sublinear convergence rate of
$O(1/T)$ on matrix completion problems.
However, our analysis can be refined in this special case. The following Theorem shows that
a linear rate can indeed be obtained, which also agrees with  
Theorem~3.1 of
\cite{wang2014rank}.

\begin{theorem}
	\label{the:rate:matcomp}
	When $f$ is the square loss,
	$f(X_T) - f(X_*) \le (1 - d_1^2 )^T \left[f(X_0) - f(X_*)\right]$.
\end{theorem}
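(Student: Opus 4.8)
The plan is to specialize Proposition~\ref{pr:descent} to the square loss and exploit the identity $f(X)=\tfrac12\FN{\nabla f(X)}^2$ highlighted in Section~\ref{sec:conv}. For $f(X)=\tfrac12\FN{\SO{X-O}}^2$ we have $\nabla f(X)=\SO{X-O}$ and $L=1$, and since $P_\Omega$ is idempotent, $\tfrac12\FN{\nabla f(X)}^2=\tfrac12\FN{\SO{X-O}}^2=f(X)$. This is exactly the structure that is unavailable for a generic Lipschitz-smooth $f$ and that forces the slower $O(1/T)$ bound of Theorem~\ref{the:rate:smtwc}; here it turns the additive decrease of Proposition~\ref{pr:descent} into a multiplicative one.

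First I would apply Proposition~\ref{pr:descent} with $L=1$ to get $f(X_t)\le f(X_{t-1})-\tfrac{\gamma_{t-1}^2}{2}\FN{\nabla f(X_{t-1})}^2$, and substitute the identity above to obtain the one-step contraction $f(X_t)\le(1-\gamma_{t-1}^2)f(X_{t-1})$. (If $\nabla f(X_{t-1})=0$ then $f(X_{t-1})=0$ and, as $f\ge0$ and the iterates are monotone, the claim is immediate, so assume each $\gamma_{t-1}$ is well defined. The optional refinement in \eqref{eq:refine1}/\eqref{eq:refine2} only lowers the objective further, since the previous iterate is feasible for it, so the contraction is unaffected.) Unrolling over $t=1,\dots,T$ and using $\gamma_{t-1}\ge d_1$, with $d_1$ as in Theorem~\ref{the:rate:strcvx}, gives $f(X_T)\le(1-d_1^2)^T f(X_0)$; since $\FN{\SO{O-X_{t-1}}}^2=2f(X_{t-1})$ and $X_0=0$, this intermediate bound is precisely the residual decay of Theorem~3.1 in \cite{wang2014rank}.

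It then remains to bring in $f(X_*)$. Since $f$ is the square loss, $f(X_*)\ge0$, and since $0<d_1\le1$ we have $0\le(1-d_1^2)^T\le1$; writing $(1-d_1^2)^T f(X_0)=(1-d_1^2)^T[f(X_0)-f(X_*)]+(1-d_1^2)^T f(X_*)$ and bounding the last term by $f(X_*)$ gives $f(X_T)\le(1-d_1^2)^T f(X_0)\le(1-d_1^2)^T[f(X_0)-f(X_*)]+f(X_*)$, which rearranges to the stated inequality. (Because $X_0=0$ is feasible for \eqref{eq:mdl}, $f(X_0)\ge f(X_*)$, so the right-hand side is nonnegative; for $T>r$ the left-hand side may be negative, as $X_T$ can have rank exceeding $r$, in which case the bound is vacuous.)

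There is no real obstacle here: once the square-loss identity is in place, the proof is a short geometric telescoping. The only delicate points are the bookkeeping around $X_*$ — the telescoping naturally yields $f(X_T)\le(1-d_1^2)^T f(X_0)$, and one must observe that the leftover $f(X_*)$ terms only help because $f(X_*)\ge0$ — and checking that the index range defining $d_1$ in Theorem~\ref{the:rate:strcvx} lower-bounds every $\gamma_{t-1}$ that appears through Proposition~\ref{pr:descent}.
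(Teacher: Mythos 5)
Your proposal is correct and follows essentially the paper's own route: specialize Proposition~\ref{pr:descent} with $L=1$ and use the square-loss structure to convert the gradient-norm term into (a multiple of) the objective, then unroll the resulting per-iteration contraction with $\gamma_{t-1}\ge d_1$. The only cosmetic difference is that the paper contracts the gap $f(X_t)-f(X_*)$ directly via the inequality $\FN{\nabla f(X_{t-1})}^2 \ge 2\left[f(X_{t-1})-f(X_*)\right]$, whereas you contract $f(X_t)$ itself using the exact identity $f(X)=\tfrac12\FN{\nabla f(X)}^2$ and recover the stated bound at the end from $f(X_*)\ge 0$ and $(1-d_1^2)^T\le 1$; your intermediate bound $f(X_T)\le(1-d_1^2)^T f(X_0)$ is in fact marginally stronger.
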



\subsection{Per-Iteration Time Complexity}
\label{sec:time-smooth}

The per-iteration time complexity of Algorithm~\ref{alg:smooth} is low. 
Here, we take the link prediction problem in
Section~\ref{sec:link} as an example. 
With $f$ defined only on the observed entries of the link matrix,
$\nabla f$ is sparse, 
and computation of $\nabla f(X_{t - 1})$ in step~3 takes $O(\SP{\Omega})$ time.
The rank-one SVD on $\nabla f(X_{t - 1})$ can be obtained by the power method \cite{halko2011finding}
in $O(\SP{\Omega})$ time.
Refining coefficients using \eqref{eq:refine1} 
takes $O(t\SP{\Omega})$ time for the $t$th iteration.
Thus,
the total per-iteration time complexity of GLRL is $O(t\SP{\Omega})$.
Similarly, 
the per-iteration time complexity of EGLRL is 
$O(\SP{\Omega})$.
In comparison,
the state-of-the-art AIS-Impute algorithm \cite{quan2015impute} (with a convergence rate of
$O(1/T^2)$) takes
$O(r^2\SP{\Omega})$ time in each iteration, whereas the alternating minimization approach in 
\cite{chiang2014prediction} (whose
convergence rate is unknown)
takes $O(r\SP{\Omega})$   time per iteration.


\subsection{Discussion}

To learn the generalized low-rank model,
\citeauthor{udell2014generalized} (\citeyear{udell2014generalized}) followed the
common approach of factorizing the
target matrix as a product of two low-rank matrices and then performing
alternating minimization 
\cite{srebro2004maximum,eriksson2012efficient,wen2012solving,chiang2014prediction,yu2014large}.
However, this may not
be very efficient, and
is much slower than R1MP 
on matrix completion problems
\cite{wang2014rank}.
More
empirical comparisons 
will be demonstrated 
in Section~\ref{sec:link}.

Similar to R1MP,
the greedy efficient component optimization (GECO) \cite{shalev2011large}
is also based on greedy approximation 
but can be used with any smooth objective. 
However, GECO is even slower than R1MP \cite{wang2014rank}.
Moreover, it does not have convergence guarantee.


\section{Low-Rank Matrix Learning with Nonsmooth Objectives}
\label{sec:nonsmooth}

Depending on the application,
different (convex) nonsmooth loss functions
may be used in generalized low-rank matrix models
\cite{udell2014generalized}.
For example, 
the $\ell_1$ loss is useful
in robust matrix factorization \cite{candes2011robust},
the scalene loss in quantile regression \cite{koenker2005quantile}, 
and the hinge loss in multilabel learning \cite{yu2014large}.
In this Section, we extend the GLRL algorithm, with simple modifications, to nonsmooth objectives.


\subsection{Proposed Algorithm}

As the objective is nonsmooth, one has to use 
the subgradient $g_t$ of $f(X_{t-1})$ 
at the $t$th iteration
instead  of the gradient in Section~\ref{sec:smooth}.
Moreover, refining the coefficients as in  (\ref{eq:refine1}) or (\ref{eq:refine2}) will now involve nonsmooth optimization, which is much harder. Hence,  we do not optimize the coefficients.
To ensure  convergence, a sufficient reduction in the objective in each iteration is still
required.  To achieve this, instead of just adding a rank-one matrix, we add a rank-$k$
matrix (where $k$ may be greater than 1).
This matrix should be most similar to $g_t$, which can be easily obtained as:
\begin{equation} \label{eq:M2}
M^* \equiv
\arg\min_{M : \rank{M} = k} \FN{M - g_t}^2 =
\sum_{i = 1}^k s_i u_i v_i^{\top},  
\end{equation} 
where $\{(u_1, v_1),\dots,(u_k, v_k)\}$ are the 
$k$ leading left and right singular vectors of $g_t$,
and $\{s_1, \dots, s_k \}$ are corresponding singular values.
The proposed procedure is shown in Algorithm~\ref{alg:nonsmooth}. The stepsize in step~3 is given
by 
\begin{equation} 
\label{eq:rate} 
\eta_t = \left\{ \begin{array}{ll}
c_1/t& 
\text{if $f$ is $\mu$-strongly convex} \\
c_2/\sqrt{t} & 
\text{if $f$ is (general) convex} 
\end{array} \right.,
\end{equation} 
where $c_1 \ge 1/\mu$ and $c_2 > 0$.

\begin{algorithm}[ht]
	\caption{GLRL for low-rank matrix learning with nonsmooth objective $f$.}
	\begin{algorithmic}[1]
		\STATE{{\bf Initialize:} $X_0 = 0$ and choose $\nu \in \left(0, 1\right)$;}
		\FOR{$t = 1, \dots, T$}
		\STATE{set $\eta_t$ as in \eqref{eq:rate};}
		\STATE{compute subgradient $g_t$ of $f(X_{t-1})$, $h_t = 0$;}
		\FOR{$i = 1, 2,\dots$}
		\STATE{$[u_i, s_i, v_i] = \text{rank1SVD}(g_t - h_t)$;}
		\STATE{$h_t = h_t + s_i u_i v_i^{\top}$;}
		\IF{$\FN{g_t - h_t}^2 \le \nu \FN{g_{t - 1} - h_{t-1}}^2 $}
		\STATE{break;}
		\ENDIF
		\ENDFOR
		\STATE {$X_{t} = X_{t - 1} - \eta_t h_t$;}
		\ENDFOR
		\RETURN $X_T$.
	\end{algorithmic}
	\label{alg:nonsmooth}
\end{algorithm}


\subsection{Convergence}

The following Theorem shows that
when $f$ is nonsmooth and strongly convex,
Algorithm~\ref{alg:nonsmooth} has
a convergence rate  of $O(\log{T}/T)$.

\begin{theorem} \label{the:rate:gs}
	Assume that $f$ is $\mu$-strongly convex,
	and 
	$\FN{g_t} \le b_1$ for some $b_1$
	($t = 1, \dots, T$), 
	then
	\[ \min_{t = 0, \dots, T} f(X_t) - f(X_*)
	\le 
	\frac{1}{T}
	\left( \left( 1 + \log T \right) \frac{c_1 b_1^2}{2} + b_2 \right), \]
	where 
	$c_1$ is as defined in \eqref{eq:rate}, and 
	$b_2$ is a constant (depending on $X_0$, $\mu$, $\nu$ and $c_1$).
\end{theorem}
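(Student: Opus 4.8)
The plan is to carry out a subgradient-descent style analysis while carefully accounting for the error introduced by replacing the true subgradient $g_t$ by its low-rank truncation $h_t$. First I would track the squared distance to the optimum: from the update $X_t = X_{t-1} - \eta_t h_t$,
\[ \FN{X_t - X_*}^2 = \FN{X_{t-1} - X_*}^2 - 2\eta_t \langle h_t, X_{t-1} - X_* \rangle + \eta_t^2 \FN{h_t}^2 . \]
Splitting $\langle h_t, X_{t-1} - X_*\rangle = \langle g_t, X_{t-1}-X_*\rangle - \langle g_t - h_t, X_{t-1}-X_*\rangle$, bounding the first term from below by $\mu$-strong convexity (since $g_t\in\partial f(X_{t-1})$, $\langle g_t, X_{t-1}-X_*\rangle \ge f(X_{t-1}) - f(X_*) + \tfrac{\mu}{2}\FN{X_{t-1}-X_*}^2$) and the second by Cauchy--Schwarz, I obtain the one-step recursion
\[ \FN{X_t - X_*}^2 \le (1 - \eta_t\mu)\FN{X_{t-1} - X_*}^2 - 2\eta_t[f(X_{t-1}) - f(X_*)] + 2\eta_t \FN{g_t - h_t}\,\FN{X_{t-1}-X_*} + \eta_t^2 \FN{h_t}^2 . \]
Two observations then control the perturbation: $h_t$ is a partial sum of the singular components of $g_t$, so $\FN{h_t} \le \FN{g_t} \le b_1$; and the inner-loop stopping rule gives $\FN{g_t - h_t}^2 \le \nu\,\FN{g_{t-1}-h_{t-1}}^2$, hence $\FN{g_t-h_t}^2 \le \nu^t b_1^2$ by induction, i.e.\ the truncation error decays geometrically.

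Second, I would establish a uniform bound $\FN{X_t - X_*} \le B$. Dropping the nonnegative term $2\eta_t[f(X_{t-1})-f(X_*)]$ and applying Young's inequality $2\eta_t\FN{g_t-h_t}\FN{X_{t-1}-X_*} \le \tfrac{\eta_t\mu}{2}\FN{X_{t-1}-X_*}^2 + \tfrac{2\eta_t}{\mu}\FN{g_t-h_t}^2$, the recursion becomes $\FN{X_t-X_*}^2 \le (1 - \tfrac{\eta_t\mu}{2})\FN{X_{t-1}-X_*}^2 + \eta_t^2 b_1^2 + \tfrac{2\eta_t}{\mu}\nu^t b_1^2$. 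Since $1 - \tfrac{\eta_t\mu}{2} < 1$ for every $t$, and $\sum_t (\eta_t^2 b_1^2 + \tfrac{2\eta_t}{\mu}\nu^t b_1^2) < \infty$ (because $\eta_t = c_1/t$ is square-summable and $\nu^t$ is summable), a straightforward induction shows the iterates remain inside a fixed ball; the radius $B$ depends only on $\FN{X_0-X_*}$ and the constants $\mu,\nu,c_1,b_1$, in accordance with the dependence claimed for $b_2$.

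Third, with boundedness in hand I would substitute $\eta_t = c_1/t$ (so $\eta_t\mu \ge 1/t$ since $c_1 \ge 1/\mu$) into the one-step recursion, divide by $2\eta_t$, and rearrange into
\[ f(X_{t-1}) - f(X_*) \le \frac{t-1}{2c_1}\FN{X_{t-1}-X_*}^2 - \frac{t}{2c_1}\FN{X_t-X_*}^2 + \frac{c_1 b_1^2}{2t} + \FN{g_t - h_t}\,B , \]
using $\tfrac{1-\eta_t\mu}{2\eta_t} = \tfrac{t-c_1\mu}{2c_1} \le \tfrac{t-1}{2c_1}$, which stays valid even when the left side is negative since the right side is nonnegative. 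Summing over $t = 1,\dots,T$: the first two terms telescope to $-\tfrac{T}{2c_1}\FN{X_T-X_*}^2 \le 0$; $\sum_{t=1}^T \tfrac1t \le 1 + \log T$; and $\sum_{t=1}^T \FN{g_t-h_t}\,B \le B b_1\sum_{t=1}^T \nu^{t/2} \le \tfrac{B b_1\sqrt\nu}{1-\sqrt\nu}$, a finite constant that I absorb into $b_2$. This yields $\sum_{t=0}^{T-1}[f(X_t)-f(X_*)] \le \tfrac{c_1 b_1^2}{2}(1+\log T) + b_2$; bounding the left side below by $T\min_{t=0,\dots,T-1}[f(X_t)-f(X_*)]$, dividing by $T$, and using $\min_{t=0,\dots,T} \le \min_{t=0,\dots,T-1}$ gives the stated bound.

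The hard part will be the boundedness step: without a uniform bound on $\FN{X_{t-1}-X_*}$ the cross term $\FN{g_t-h_t}\FN{X_{t-1}-X_*}$ is only linear in the distance and hence uncontrolled, and unlike the textbook subgradient method there is no projection onto a fixed feasible set to fall back on. Making this work requires combining the geometric decay $\FN{g_t-h_t}^2 \le \nu^t b_1^2$ with the strong-convexity contraction factor $1 - \tfrac{\eta_t\mu}{2}$ — a bare triangle-inequality estimate fails because $\sum_t \eta_t$ diverges. A minor point to verify is that the inner loop always terminates (each pass removes one singular component of $g_t$, so after at most $m$ passes $g_t-h_t=0$ and the stopping rule is met), so that $h_t$ and the decay estimate are well defined.
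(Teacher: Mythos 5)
Your proposal is correct in substance, but it handles the crux of the proof by a genuinely different mechanism than the paper. Both arguments start from the same expansion of $\FN{X_{t-1}-X_*-\eta_t h_t}^2$, the strong-convexity (subgradient) inequality, the stepsize $\eta_t=c_1/t$ with telescoping, the bound $\frac{c_1}{2}\sum_t \FN{h_t}^2/t \le \frac{c_1 b_1^2}{2}(1+\log T)$, and the geometric decay $\FN{g_t-h_t}\le O(\nu^{t/2})$ enforced by the inner loop. The difference is how the perturbation term $\langle X_*-X_{t-1}, h_t-g_t\rangle$ is killed: the paper never proves iterate boundedness; instead it keeps the surplus quadratic terms $\hat{c}\FN{X_{t-1}-X_*}^2$ with $\hat{c}=\frac{1}{2}(\mu-1/c_1)\ge 0$ left over from strong convexity after telescoping, and absorbs the cross term via $\FN{A}^2+2\langle A,B\rangle \ge -\FN{B}^2$, paying $\frac{1}{4\hat c}\sum_t\FN{h_t-g_t}^2 \le \frac{b_1^2}{(1-\nu)\hat c}$ (note this constant blows up as $c_1\downarrow 1/\mu$, so the paper implicitly needs $c_1>1/\mu$). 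You instead prove a uniform bound $\FN{X_t-X_*}\le B$ as a separate lemma (contraction factor $1-\eta_t\mu/2$ plus summable errors $\eta_t^2 b_1^2+\frac{2\eta_t}{\mu}\nu^t b_1^2$) and then finish with Cauchy--Schwarz and the geometric sum $Bb_1\sum_t \nu^{t/2}$. Your route costs an extra lemma but works even at $c_1=1/\mu$ and makes the dependence of $b_2$ on $\FN{X_0-X_*}$ explicit; the paper's route is shorter and needs no boundedness claim at all. One step of yours needs care: dropping $-2\eta_t[f(X_{t-1})-f(X_*)]$ in the boundedness lemma presupposes $f(X_{t-1})\ge f(X_*)$, which is automatic only if $X_*$ is an unconstrained minimizer (the paper's other proofs do assume $\nabla f(X_*)=0$, but Algorithm~\ref{alg:nonsmooth}'s iterates can exceed rank $r$, so with the rank-constrained $X_*$ this term can be positive). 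The fix is easy and stays within your framework: by the same subgradient inequality, $f(X_*)-f(X_{t-1})\le b_1\FN{X_{t-1}-X_*}$, so the offending term is at most $2\eta_t b_1\FN{X_{t-1}-X_*}$ and can be absorbed by Young's inequality into the contraction, still yielding a finite $B$; with that patch your argument is sound.
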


When $f$ is only (general) convex, 
the following Theorem shows that
the rate 
is reduced to $O(1/\sqrt{T})$.

\begin{theorem} \label{the:rate:gc}
	Assume  that
	$f$ is (general) convex, 
	$\FN{g_t} \le b_1$ for some $b_1$ and $\FN{X_t - X_*} \le b_3$ for
	some $b_3$
	($t = 1, \dots, T$), 
	then
	\begin{align*}
	\min_{t = 0,\dots,T} f(X_t) - f(X_*)
	\le \frac{c_2(b_1^2 + b_3^2)}{2 \sqrt{T}} - \frac{2 b_1 b_3}{(1 - \sqrt{\nu}) T},
	\end{align*}
	where $c_2$ is as defined in \eqref{eq:rate}.
\end{theorem}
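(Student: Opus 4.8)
The plan is to mimic the standard subgradient-method analysis, but to account for the fact that at each step we move along $h_t$ (a rank-$k$ truncation of the subgradient $g_t$) rather than along $g_t$ itself. First I would expand $\FN{X_t - X_*}^2 = \FN{X_{t-1} - \eta_t h_t - X_*}^2 = \FN{X_{t-1} - X_*}^2 - 2\eta_t \langle h_t, X_{t-1} - X_* \rangle + \eta_t^2 \FN{h_t}^2$, then write $\langle h_t, X_{t-1} - X_*\rangle = \langle g_t, X_{t-1} - X_*\rangle + \langle h_t - g_t, X_{t-1} - X_* \rangle$. By convexity of $f$, $\langle g_t, X_{t-1} - X_*\rangle \ge f(X_{t-1}) - f(X_*)$, giving the usual descent term. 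The nonstandard term is $\langle h_t - g_t, X_{t-1} - X_*\rangle$, which by Cauchy--Schwarz is bounded in absolute value by $\FN{h_t - g_t}\,\FN{X_{t-1}-X_*} \le b_3 \FN{g_t - h_t}$; here I invoke the stopping rule in step~8 of Algorithm~\ref{alg:nonsmooth}, which guarantees $\FN{g_t - h_t}^2 \le \nu \FN{g_{t-1} - h_{t-1}}^2$, so that $\FN{g_t - h_t} \le \nu^{t/2}\FN{g_0 - h_0}$ decays geometrically. Also $\FN{h_t} \le \FN{g_t} \le b_1$ since $h_t$ is a partial sum of the singular-value expansion of $g_t$ (its squared norm being at most $\FN{g_t}^2$).

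Next I would rearrange to isolate $f(X_{t-1}) - f(X_*)$:
\begin{align*}
f(X_{t-1}) - f(X_*) \le \frac{1}{2\eta_t}\left(\FN{X_{t-1}-X_*}^2 - \FN{X_t - X_*}^2\right) + \frac{\eta_t}{2}b_1^2 + b_3 \FN{g_t - h_t}.
\end{align*}
Summing over $t = 1,\dots,T$ and using $\eta_t = c_2/\sqrt t$, the telescoping-like term $\sum_t \frac{1}{2\eta_t}(\FN{X_{t-1}-X_*}^2 - \FN{X_t-X_*}^2)$ is handled in the classical way (either by Abel summation, using monotonicity of $1/\eta_t$, or by the cleaner bound that pairs it with $\max_t \FN{X_t - X_*}^2 \le b_3^2$), yielding something of order $b_3^2/(2\eta_T) = b_3^2 \sqrt T/(2 c_2)$. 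The stepsize-sum term gives $\frac{b_1^2 c_2}{2}\sum_t 1/\sqrt t \le b_1^2 c_2 \sqrt T$. The geometric-decay term gives $b_3 \sum_{t\ge 1}\nu^{t/2}\FN{g_0-h_0} = b_3 \FN{g_0 - h_0}\sqrt\nu/(1-\sqrt\nu)$, a constant; identifying $\FN{g_0 - h_0}$ with $2b_1$ (or the appropriate bound from initialization, $g_0$ being the subgradient and $h_0 = 0$) produces the $-2b_1 b_3/((1-\sqrt\nu)T)$ contribution after dividing by $T$. Finally, since $\min_{t=0,\dots,T} f(X_t) - f(X_*) \le \frac{1}{T}\sum_{t=1}^T [f(X_{t-1}) - f(X_*)]$, dividing the summed inequality by $T$ gives the claimed $O(1/\sqrt T)$ bound with the stated constants.

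I expect the main obstacle to be bookkeeping the precise constants so that the final expression matches the stated form $\frac{c_2(b_1^2 + b_3^2)}{2\sqrt T} - \frac{2b_1 b_3}{(1-\sqrt\nu)T}$ exactly; in particular, getting the coefficient of $\sqrt T/T = 1/\sqrt T$ to come out as $c_2(b_1^2+b_3^2)/2$ requires choosing the right way to bound $\sum 1/\sqrt t$ (e.g.\ by $2\sqrt T$ versus $\sqrt T$) and the right split of the telescoping term, and the sign of the $1/T$ term suggests the constant is being \emph{subtracted}, which only makes sense if it is absorbed as a lower-order correction — so I would need to be careful that the inequality direction is preserved throughout. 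A secondary subtlety is verifying that the inner loop (steps~5--11) always terminates, i.e.\ that finitely many rank-one deflations suffice to drive $\FN{g_t - h_t}^2$ below $\nu\FN{g_{t-1}-h_{t-1}}^2$; this follows because after at most $m$ deflations $\FN{g_t - h_t} = 0$, but it should be noted to make the bound $\FN{h_t}\le b_1$ and the per-iteration rank well-defined.
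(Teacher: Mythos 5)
Your plan is essentially the paper's own route: expand $\FN{X_t-X_*}^2$ for the update $X_t=X_{t-1}-\eta_t h_t$, split $h_t$ into $g_t$ plus the truncation error, use convexity on the $\langle g_t, X_{t-1}-X_*\rangle$ term, bound $\langle h_t-g_t,X_{t-1}-X_*\rangle$ by Cauchy--Schwarz and the geometric decay $\FN{h_t-g_t}\le(\sqrt{\nu})^{t-1}\FN{h_1-g_1}\le 2b_1(\sqrt{\nu})^{t-1}$ guaranteed by the stopping rule, use $\FN{h_t}\le\FN{g_t}\le b_1$, sum with $\eta_t=c_2/\sqrt{t}$, and pass from the average to the minimum via the same final observation. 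As a derivation of an $O(1/\sqrt{T})$ rate this is sound, and your side remarks (that $\FN{h_t}\le\FN{g_t}$ because $h_t$ is a truncated SVD of $g_t$, and that the inner loop terminates after at most $m$ deflations) are correct.

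The obstacle you flag about reproducing the exact constants is, however, genuine, and you should not try to force them. First, the sign: in any honest version of this argument the geometric-series term enters with a plus; the paper's own penultimate inequality is equivalent to $\sum_{t=1}^T[f(X_{t-1})-f(X_*)]\le\frac{c_2(b_1^2+b_3^2)}{2}\sum_{t=1}^T\frac{1}{\sqrt{t}}+\frac{2b_1b_3}{1-\sqrt{\nu}}$, and the minus sign in the theorem appears only through a sign slip when dividing by $T$. Second, your classical handling of the telescoping term yields roughly $\frac{1}{2\eta_T}\max_t\FN{X_t-X_*}^2\approx\frac{b_3^2\sqrt{T}}{2c_2}$, i.e.\ a $b_3^2/c_2$ dependence, whereas the stated bound has $c_2 b_3^2$; the paper arrives at the latter by treating the coefficients $\frac{1}{2}\left(\frac{1}{\eta_t}-\frac{1}{\eta_{t+1}}\right)$ as if they were $\frac{c_2}{2}\left(\frac{1}{\sqrt{t}}-\frac{1}{\sqrt{t+1}}\right)$ and then bounding them crudely by $-\frac{c_2}{2\sqrt{t}}$, so the two treatments genuinely produce different constants (yours is the standard and cleaner one). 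Third, $\sum_{t=1}^T 1/\sqrt{t}\le 2\sqrt{T}$, not $\sqrt{T}$, so even the paper's route gives $\frac{c_2(b_1^2+b_3^2)}{\sqrt{T}}$ rather than the displayed $\frac{c_2(b_1^2+b_3^2)}{2\sqrt{T}}$. In short, your argument proves the correct-order statement, e.g.\ $\min_{0\le t\le T}f(X_t)-f(X_*)\le O\!\left(\frac{c_2 b_1^2+b_3^2/c_2}{\sqrt{T}}\right)+\frac{2b_1b_3}{(1-\sqrt{\nu})T}$, and the inequality exactly as displayed in the theorem is not attainable by this (or the paper's) line of reasoning.
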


For other convex nonsmooth optimization problems, the same 
$O(\log(T)/T)$ rate for strongly convex objectives and
and $O(1/\sqrt{T})$ rate for general convex objectives have also been observed
\cite{shalev2011pegasos}.
However, their analysis is for different problems, and cannot be readily applied to our low-rank matrix learning problem here.

\begin{figure*}[ht]
	\centering
	\subfigure[Epinions (training).] 
	{\includegraphics[width = 0.22\textwidth]{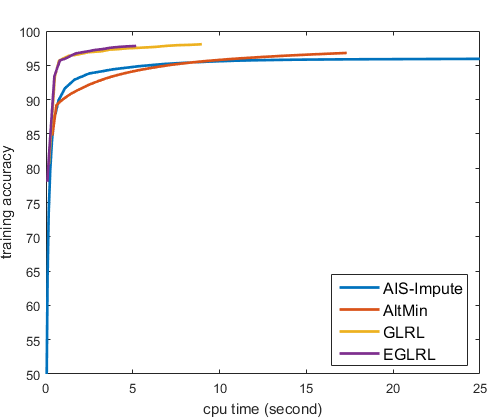}}
	\quad	
	\subfigure[Epinions (testing).] 
	{\includegraphics[width = 0.22\textwidth]{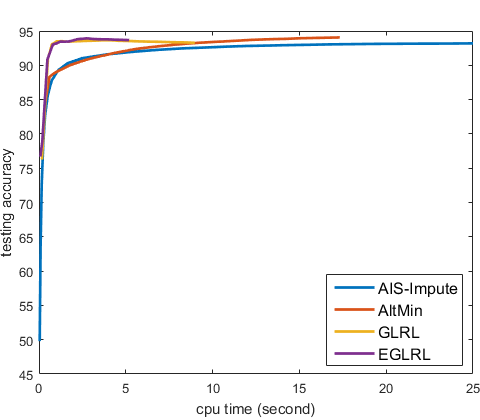}}
	\quad
	\subfigure[Slashdot (training).] 
	{\includegraphics[width = 0.22\textwidth]{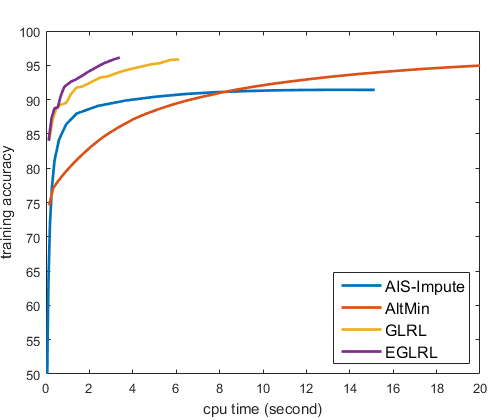}}
	\quad
	\subfigure[Slashdot (testing).] 
	{\includegraphics[width = 0.22\textwidth]{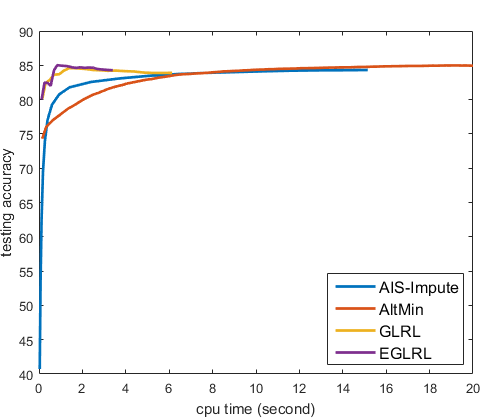}}
	\vspace{-.1in}
	\caption{Training and testing accuracies (\%) vs CPU time (seconds) on the Epinions (left) and Slashdot (right) data sets.}
	\label{fig:link:accu}
\end{figure*}


\subsection{Per-Iteration Time Complexity}
\label{sec:time-nonsmooth}

To study the per-iteration time complexity, we 
take the robust matrix factorization problem in Section~\ref{sec:robust} as an example.
The main computations are on steps~4 and 6.
In step~4,
since the subgradient $g_t$ is sparse (nonzero only at the observed entries),
computing $g_t$
takes $O(\SP{\Omega})$ time.
At outer iteration $t$ and inner iteration $i$, 
$g_t$ 
in step~6
is sparse and $h_t$ has low rank (equal to $i - 1$). Thus,
$g_t - h_t$ admits the so-called ``sparse plus low-rank'' structure \cite{mazumder2010spectral,qyao2015icdm}. This 
allows matrix-vector multiplications and 
subsequently rank-one SVD to be performed much more efficiently.
Specifically,
for any $v \in \R^n$,
the multiplication
$(g_t - h_t)v$ 
takes only $O(\SP{\Omega} + (i - 1)n)$ time
(and similarly for the multiplication $u^{\top}(g_t - h_t)$ with any $u \in \R^m$),
and rank-one SVD
using the power method 
takes $O(\SP{\Omega} + (i - 1)n)$ time.
Assuming that $i_t$ inner iterations are run at (outer)
iteration $t$,
it takes a total of $O(i_t \SP{\Omega} + (i_t - 1)^2n)$ time. 
Typically,
$i_t$ is small
(empirically, usually 1 or 2).

In comparison,
though 
the ADMM algorithm in \cite{lin2010augmented}
has a faster 
$O(1/T)$
convergence rate,
it needs SVD and takes
$O(m^2 n)$ time in each iteration.
As for the Wiberg algorithm 
\cite{eriksson2012efficient},
its convergence rate
is unknown and
a linear program with $m r + \SP{\Omega}$ variables
needs to be solved
in each iteration.
As will be seen in Section~\ref{sec:robust}, this is much slower than GLRL.


\section{Experiments}
\label{sec:expt}

In this section, 
we compare the proposed algorithms with the state-of-the-art on 
link prediction 
and robust matrix factorization.
Experiments are performed on a PC with Intel i7 CPU and 32GB RAM.  
All the codes are in Matlab.


\subsection{Social Network Analysis}
\label{sec:link}

Given a graph with $m$ nodes
and an incomplete
adjacency  matrix $O \in \{\pm 1\}^{m \times m}$,
link prediction aims to recover a low-rank matrix $X \in \R^{m \times m}$ 
such that
the signs of 
$X_{ij}$'s 
and $O_{ij}$'s
agree on most of the observed entries.
This can be formulated as the following optimization problem 
\cite{chiang2014prediction}:
\begin{equation} \label{eq:logmatcomp}
\min_X \sum_{(i,j) \in \Omega} \log (1 + \exp(- X_{ij} O_{ij})) \;:\; \rank{X} \le r,
\end{equation} 
where $\Omega$ contains indices of the observed entries. Note that (\ref{eq:logmatcomp})
uses
the logistic loss, which is more appropriate as 
$O_{ij}$'s are binary.

The objective 
in (\ref{eq:logmatcomp})
is convex and smooth.
Hence, we compare
the proposed GLRL (Algorithm~\ref{alg:smooth} with coefficient update step (\ref{eq:refine1})) and
its economic variant EGLRL (using 
coefficient update step (\ref{eq:refine2})) 
with the following:
\begin{enumerate}
	\item AIS-Impute 
	\cite{quan2015impute}:
	This is an accelerated proximal gradient algorithm with further speedup based on 
	approximate SVD and the special ``sparse plus low-rank'' matrix structure in matrix
	completion;
	
	\item Alternating minimization
	(``AltMin'') \cite{chiang2014prediction}: This
	factorizes $X$ as a product of two low-rank matrices, 
	and then uses alternating gradient descent for optimization.
\end{enumerate}
As a further baseline, we also compare with the
GLRL variant that does not perform coefficient update.
We do not compare with greedy efficient component optimization (GECO) \cite{shalev2011large},
matrix norm boosting \cite{zhang2012accelerated} 
and active subspace selection \cite{hsieh2014nuclear},
as they have been shown to be slower 
than AIS-Impute and
AltMin
\cite{quan2015impute,wang2014rank}.

Experiments are performed on the
Epinions 
and 
Slashdot 
data
sets\footnote{\url{https://snap.stanford.edu/data/}} \cite{chiang2014prediction}
(Table~\ref{tab:logistic}).
Each row/column of the matrix $O$ corresponds to a user
(users with fewer than two observations are removed).
For Epinions, $O_{ij} = 1$ if user $i$ trusts user $j$, and $-1$ otherwise.
Similarly 
for Slashdot, $O_{ij} = 1$ if user $i$ tags user $j$ as friend, and $-1$ otherwise.

\begin{table}[ht]
	\centering
	\caption{Signed network data sets used. }
	\renewcommand{\arraystretch}{1.2}
	\begin{tabular}{c | c | c | c}
		\hline
		& \#rows & \#columns & \#observations    \\ \hline
		Epinions & 42,470 & 40,700   & $7.5 \times 10^5$ \\ \hline
		Slashdot & 30,670 & 39,196   & $5.0 \times 10^5$ \\ \hline
	\end{tabular}
	\label{tab:logistic}
\end{table}

As in \cite{wang2014rank}, we fix the number of power method iterations to $30$.
Following \cite{chiang2014prediction},
we 
use 10-fold cross-validation and
fix the rank $r$ to 40.
Note that AIS-Impute uses the nuclear norm regularizer and does not explicitly constrain the rank.
We select its regularization parameter so that its output rank is 40.
To obtain a rank-$r$ solution,
GLRL is simply run for
$r$ iterations.
For AIS-Impute and AltMin, 
they are stopped when the relative change in the objective is smaller than $10^{-4}$.
The output predictions are binarized by thresholding at zero.
As in \cite{chiang2014prediction}, 
the sign prediction accuracy 
is used as performance measure.

Table~\ref{tab:tencent} shows the sign prediction accuracy on the test set.
All methods,
except the GLRL variant that does not perform coefficient update,
have comparable prediction performance. 
However,  
as shown in Figure~\ref{fig:link:accu}, 
AltMin and AIS-Impute are much slower (as discussed in 
Section~\ref{sec:time-smooth}).
EGLRL 
has the lowest
per-iteration cost, and is 
also faster than GLRL.

\begin{table}[ht]
	\centering
	\caption{Testing sign prediction accuracy (\%) on link prediction.
		The best 
		and comparable 
		results (according to the pairwise t-test with 95\% confidence) are highlighted.}
	\label{tab:tencent}
	\renewcommand{\arraystretch}{1.2}
	\begin{tabular}{c| c | c}
		\hline
		& Epinions              & Slashdot              \\ \hline
		AIS-Impute     & 93.3$\pm$0.1          & 84.2$\pm$0.1          \\ \hline
		AltMin       & 93.5$\pm$0.1          & \textbf{84.9$\pm$0.1} \\ \hline
		GLRL w/o coef upd & 92.4$\pm$0.1          & 82.6$\pm$0.3          \\ \hline
		GLRL        & \textbf{93.6$\pm$0.1} & 84.1$\pm$0.4          \\ \hline
		EGLRL       & \textbf{93.6$\pm$0.1} & 84.4$\pm$0.3          \\ \hline
	\end{tabular}
\end{table}

\begin{figure*}[ht]
	\centering
	{\includegraphics[width = 0.22 \textwidth]{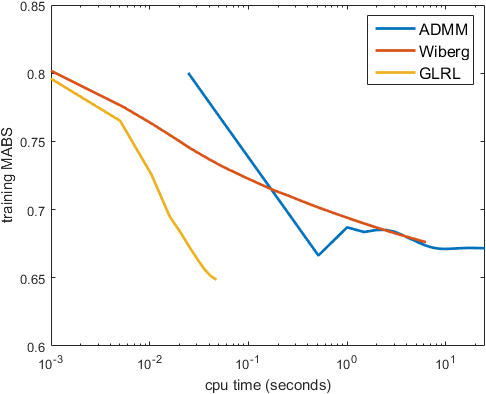}}
	\quad\quad\quad\quad\quad
	{\includegraphics[width = 0.22 \textwidth]{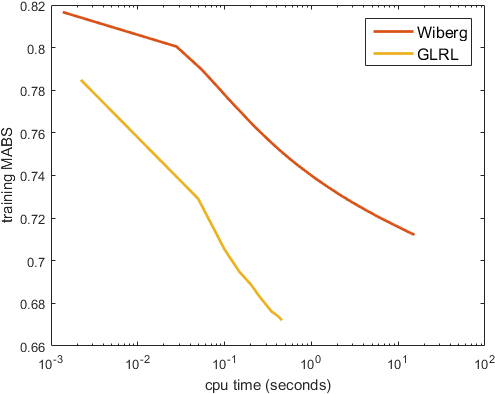}}
	\quad\quad\quad\quad\quad
	{\includegraphics[width = 0.22 \textwidth]{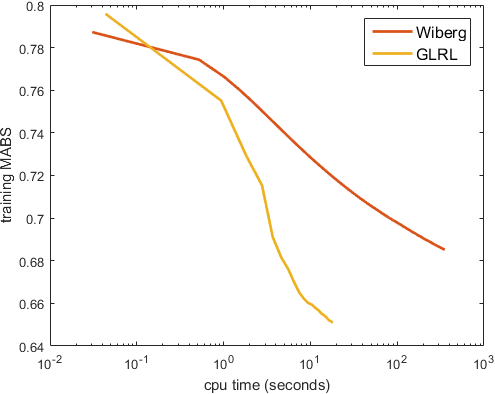}}
	\\
	\subfigure[100K.]{\includegraphics[width = 0.22 \textwidth]{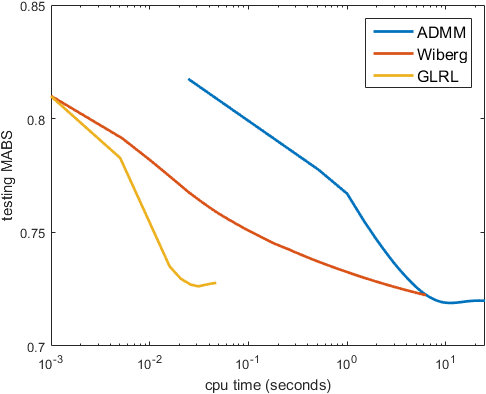}}
	\quad\quad\quad\quad\quad
	\subfigure[1M.]{\includegraphics[width = 0.22 \textwidth]{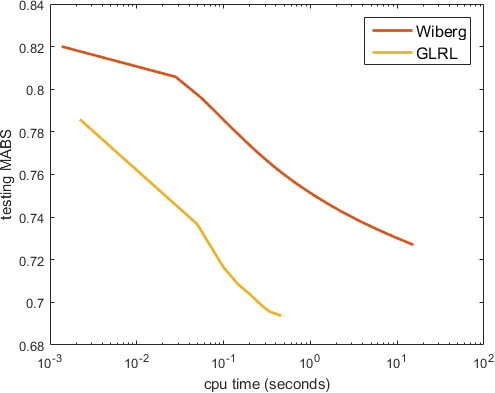}}
	\quad\quad\quad\quad\quad
	\subfigure[10M.]{\includegraphics[width = 0.22 \textwidth]{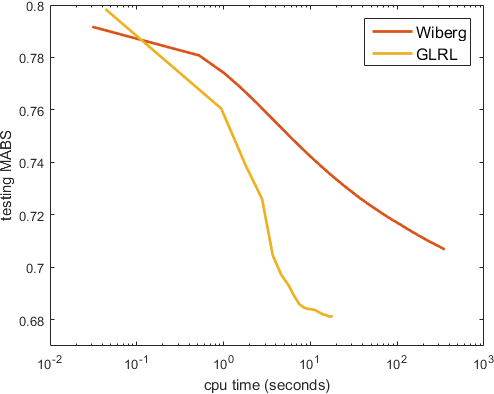}}
	\vspace{-.1in}
	\caption{Training (top) and testing (bottom) MABS vs CPU time (seconds) on the MovieLens data sets.}
	\label{fig:mlabs}
\end{figure*}


\subsection{Robust Matrix Factorization}
\label{sec:robust}

Instead of using 
the square loss,
robust matrix factorization  uses the $\ell_1$ loss to reduce
sensitivities to outliers
\cite{candes2011robust}.
This can be formulated as the optimization problem
\cite{lin2010augmented}:
\[ \min_{X} \sum_{(i,j)\in\Omega} \left| X_{ij} - O_{ij} \right| 
\;\text{s.t.}\; \rank{X} \le r. \]
Note that the objective is only general convex, and
its subgradient is bounded
($\le \FN{\Omega}$).
Since there is no smooth component in the objective, 
AIS-Impute and AltMin cannot be used.
Instead, 
we compare 
GLRL 
in Algorithm~\ref{alg:nonsmooth}
(with $\nu = 0.99$ and $c_2 =0.05$)
with 
the following:
\begin{enumerate}	
	\item Alternating direction method of multipliers (ADMM)\footnote{\url{http://perception.csl.illinois.edu/matrix-rank/Files/inexact_alm_rpca.zip}} \cite{lin2010augmented}:
	The rank constraint is replaced by the nuclear norm regularizer, and 
	ADMM \cite{boyd2011distributed} 
	is then used
	to solve the equivalent problem:
	$\min_{X,Y} \sum_{(i,j)\in\Omega} |X_{ij} - O_{ij}| + \lambda \NN{Y} \;\text{s.t.}\; Y = X$.
	
	\item The Wiberg algorithm \cite{eriksson2012efficient}:
	It factorizes $X$ into $U V^{\top}$ and optimizes them by linear
	programming. Here, we use 
	the linear programming solver
	in Matlab.
	
\end{enumerate}

Experiments are performed on the MovieLens data sets\footnote{\url{http://grouplens.org/data sets/movielens/}}
(Table~\ref{tab:sparse}),
which have been
commonly used for evaluating recommender systems \cite{wang2014rank}.
They contain ratings $\{1,2,\dots,5\}$ assigned by various users on
movies.
The setup is the same as in \cite{wang2014rank}.
$50\%$ of the ratings 
are randomly sampled
for training while the rest for testing.
The ranks used for the 100K, 1M, 10M data sets are
10, 10, and 20, respectively.
For performance evaluation,
we use the mean absolute error 
on the unobserved entries $\Omega^\perp$:
\[ \text{MABS} = \frac{1}{\SP{\Omega^\perp}} \sum_{(i,j) \in \Omega^\perp} |\hat{X}_{ij} -
O_{ij}|, \]
where $\hat{X}$ is the predicted matrix
\cite{eriksson2012efficient}.
Experiments are repeated five times with random training/testing splits.

\begin{table}[ht]
	\centering
	\caption{MovieLens data sets used in the experiments.}
	\vspace{5px}
	\renewcommand{\arraystretch}{1.2}
	\begin{tabular}{c | c | c | c}
		\hline
		     & \#users & \#movies & \#ratings \\ \hline
		100K & 943     & 1,682    & $10^5$    \\ \hline
		 1M  & 6,040   & 3,449    & $10^6$    \\ \hline
		10M  & 69,878  & 10,677   & $10^7$    \\ \hline
	\end{tabular}
	\label{tab:sparse}
\end{table}

Results are shown in Table~\ref{tab:matrobust}. As can be seen,
ADMM performs slightly better on the 100K data set,
and GLRL is more accurate than Wiberg.
However, ADMM is computationally
expensive as
SVD is required in each iteration. Thus, it cannot be run on the larger 1M and 10M data
sets.
Figure~\ref{fig:mlabs} shows the convergence of MABS with CPU time.
As can be seen,
GLRL is the fastest, which is then followed by
Wiberg, and ADMM is the slowest.

\begin{table}[ht]
	\centering
	\caption{Testing MABS on the MovieLens data sets. 
		The best results (according to the pairwise t-test with 95\% confidence) are highlighted.
		ADMM cannot converge in $1000$ seconds on the 1M and 10M data sets, and thus is not shown.}
	\label{tab:matrobust}
	\vspace{5px}
	\renewcommand{\arraystretch}{1.2}
	\begin{tabular}{c| c | c | c }
		\hline
		& 100K                     & 1M                       & 10M                      \\ \hline
		ADMM  & \textbf{0.717$\pm$0.004} & ---                      & ---                      \\ \hline
		Wiberg & 0.726$\pm$0.001          & 0.728$\pm$0.006          & 0.715$\pm$0.005          \\ \hline
		GLRL  & 0.724$\pm$0.004          & \textbf{0.694$\pm$0.001} & \textbf{0.683$\pm$0.001} \\ \hline
	\end{tabular}
\end{table}


\section{Conclusion}

In this paper, we propose an efficient greedy algorithm for the learning of generalized low-rank models.
Our algorithm is based on 
the state-of-art R1MP  algorithm, but 
allows the optimization objective to be smooth or nonsmooth, general convex or strongly  convex.
Convergence analysis shows that the proposed algorithm has fast convergence rates, and is compatible with those obtained on
other (convex) smooth/nonsmooth optimization problems.  Specifically, on smooth problems, it
converges with a rate of $O(1/T)$ on general convex problems and 
a linear rate on strongly convex problems. On nonsmooth problems, 
it converges with a rate of $O(1/\sqrt{T})$ on general convex problems and 
$O(\log(T)/T)$ rate on strongly convex problems. 
Experimental results on link prediction and robust matrix factorization 
show that the proposed algorithm achieves comparable or better prediction performance as
the state-of-the-art,
but is much faster.

\section*{Acknowledgments}

Thanks for helpful discussion from Lu Hou.
This research was supported in part by
the Research Grants Council of the Hong Kong Special Administrative Region
(Grant 614513).

{
\bibliographystyle{named}
\bibliography{paper}
}

\appendix

\section*{Proposition~\ref{pr:descent}}
\begin{proof}
At the $t$th iteration, we have $X_{t - 1} = \sum_{i = 1}^{t - 1}\theta_i u_i v_i^{\top}$.
Construct $\bar{\theta} = \left[ \bar{\theta}_i \right] \in \R^t$ as  
\begin{align*}
\bar{\theta}_i =
\begin{cases}
\theta_i & i < t \\
- s_t / L & i = t
\end{cases},
\end{align*}
and let $\bar{X}_{t} = \sum_{i = 1}^t \bar{\theta}_i u_i v_i^{\top}$.
Thus,
$\bar{X}_t - X_{t - 1} = - s_t u_t v_t^{\top}/L $.
As $f$ is $L$-Lipschitz smooth, 
\begin{eqnarray}
f(\bar{X}_t) & \le & f(X_{t-1}) + \langle \bar{X}_t - X_{t-1}, \nabla f(X_{t-1}) \rangle
\nonumber\\
&& + \frac{L}{2} \FN{\bar{X}_t - X_{t-1}}^2.  \label{eq:temp1} 
\end{eqnarray}
Note that $u_t^{\top} \nabla f(X_{t - 1}) v_t = s_t$. Together
with \eqref{eq:temp1},  we have
\begin{align}
f(\bar{X}_{t}) \le f(X_{t - 1}) - 
\frac{1}{2 L} \langle \nabla f(X_{t - 1}), u_t v_t^{\top} \rangle^2. 
\label{eq:temp2}
\end{align}

For the second term on the RHS of (\ref{eq:temp2}), 
\begin{align*}
\left\langle \nabla f(X_{t - 1}), u_t v_t^{\top} \right\rangle 
& = \gamma_{t-1} \FN{\nabla f(X_{t - 1})}\FN{u_t v_t^{\top}} \\
& = \gamma_{t-1} \FN{\nabla f(X_{t - 1})},
\end{align*}
where
$\gamma_{t-1}
= \frac{\left\langle \nabla f(X_{t-1}), u_t v_t^{\top} \right\rangle }{\FN{\nabla f(X_{t-1})}}$.
Note that $s_t$ is the largest singular value of $\nabla f(X_{t - 1})$.
Thus, $s_t \ge \FN{\nabla f(X_{t-1})}/\sqrt{m}$, 
and
\begin{align*}
\gamma_{t-1} = \frac{s_t}{\FN{\nabla f(X_{t - 1})}} \in [1/\sqrt{m}, 1]
\end{align*}

Then (\ref{eq:temp2}) becomes
\begin{align*}
f(\bar{X}_t)
\le f(X_{t - 1})
- \frac{\gamma_{t-1}^2}{2 L} \FN{\nabla f(X_{t - 1})}^2.
\end{align*}

In Algorithm~\ref{alg:smooth}, 
$X_t$ is obtained by minimizing $\theta_t$ over \eqref{eq:refine1} or \eqref{eq:refine2}.
Once we warm start $\theta$ using $\bar{\theta}$, 
we can ensure that
$f(X_t) \le f(\bar{X}_t)$, 
and thus the Proposition holds.
\end{proof}


\section*{Theorem~\ref{the:rate:strcvx}}

\begin{lemma} \label{lem:app1}
If $f$ is $\mu$-strongly convex, $f(Y) \ge f(X) - \frac{1}{2 \mu} \FN{\nabla f(X)}^2$
for any $X, Y$.
\end{lemma}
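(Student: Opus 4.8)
The plan is to exploit the defining inequality of $\mu$-strong convexity as a global quadratic lower bound on $f$ and then minimize that lower bound. First I would fix $X$ and invoke Definition~\ref{def:strcvx}, with the two points chosen so that $\nabla f$ is evaluated at $X$, to obtain, for every matrix $Z$,
\begin{align*}
f(Z) \ge f(X) + \left\langle Z - X, \nabla f(X) \right\rangle + \frac{\mu}{2}\FN{Z - X}^2.
\end{align*}
The right-hand side is a strictly convex quadratic in the displacement $D = Z - X$, so it attains its global minimum where its gradient in $D$ vanishes, namely at $D = -\frac{1}{\mu}\nabla f(X)$.

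Substituting this minimizer gives
\begin{align*}
f(Z) \ge f(X) - \frac{1}{\mu}\FN{\nabla f(X)}^2 + \frac{1}{2\mu}\FN{\nabla f(X)}^2 = f(X) - \frac{1}{2\mu}\FN{\nabla f(X)}^2,
\end{align*}
which holds for every $Z$, hence in particular for $Z = Y$, giving the claim. If one prefers to avoid any appeal to calculus, the same conclusion follows by completing the square: $f(X) + \langle D, \nabla f(X)\rangle + \frac{\mu}{2}\FN{D}^2 = f(X) + \frac{\mu}{2}\FN{D + \frac{1}{\mu}\nabla f(X)}^2 - \frac{1}{2\mu}\FN{\nabla f(X)}^2 \ge f(X) - \frac{1}{2\mu}\FN{\nabla f(X)}^2$, where the last step drops a nonnegative term.

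There is essentially no obstacle in this lemma — the only point worth noting is that the minimized bound is independent of the argument, so the inequality for \emph{all} $Y$ is automatic once it is established for the minimizing $Z$. The role of the lemma in what follows is the substantive part: it will be paired with the per-iteration decrease $-\frac{\gamma_{t-1}^2}{2L}\FN{\nabla f(X_{t-1})}^2$ of Proposition~\ref{pr:descent} (taking $X = X_{t-1}$, $Y = X_*$) to convert the gradient-norm term into a multiple of the suboptimality gap $f(X_{t-1}) - f(X_*)$, thereby producing the contraction factor $1 - d_1^2 \mu / L$ claimed in Theorem~\ref{the:rate:strcvx}.
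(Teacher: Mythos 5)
Your proof is correct and is essentially the paper's own argument: both apply the strong-convexity inequality with the gradient evaluated at $X$ and then minimize the resulting quadratic lower bound over the second argument, with the minimizer $X - \frac{1}{\mu}\nabla f(X)$ yielding the bound $f(X) - \frac{1}{2\mu}\FN{\nabla f(X)}^2$. The completing-the-square remark and the comments on how the lemma feeds into Theorem~\ref{the:rate:strcvx} are accurate but add nothing beyond the paper's proof.
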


\begin{proof}
Since $f$ is $\mu$-strongly convex, 
\begin{align*}
f(Y) 
& \ge f(X) + \left\langle \nabla f(X), Y - X \right\rangle 
+ \frac{\mu}{2} \FN{Y - X}^2 \\
& \ge \min_{\bar{Y}}
f(X) + \left\langle \nabla f(X), \bar{Y} - X \right\rangle 
+ \frac{\mu}{2} \FN{\bar{Y} - X}^2.
\end{align*}

The minimum is achieved at $\bar{Y} = X - \frac{1}{\mu}\nabla f(X)$,
and $f(\bar{Y}) =f(X) - \frac{1}{2 \mu} \FN{\nabla f(X)}^2$.
\end{proof}

\begin{proof}
	On optimal $\nabla f(X_*) = 0$, using Lemma~\ref{lem:app1}:
	\begin{eqnarray}
	\FN{ \nabla f(X_{t-1}) - \nabla f(X_*) }^2 \ge 2 \mu \left[ f(X_{t-1}) - f(X_*) \right].
	\label{eq:temp3}
	\end{eqnarray}
	Combine it with Proposition~\ref{pr:descent}, then
	\begin{align*}
	& f(X_t) - f(X_*) \\
	& \le \left[ f(X_{t - 1}) - f(X_*) \right] - \frac{(\gamma_{t-1})^2}{2 L} \FN{\nabla f(X_{t - 1})}^2, \\
	& \le (1 - \frac{\mu (\gamma_{t-1})^2}{L}) \left[ f(X_{t - 1}) - f(X_*) \right], \\
	& \le (1 - \frac{\mu d_1^2}{L}) \left[ f(X_{t - 1}) - f(X_*) \right]
	\end{align*}
	Induct from $t = 1$ to $t = T$, we then have the Theorem.
\end{proof}


\section*{Theorem~\ref{the:rate:smtwc}}

First, we show that 
$\{ \FN{X_t - X_*} \}$
is upper-bounded. 

\begin{proposition} \label{pr:upbndx}
For $\{X_t\}$ generated by Algorithm~\ref{alg:smooth}, 
$\max_{t=1}^T \FN{X_t - X_*} \le d_2$
for some $d_2$.
\end{proposition}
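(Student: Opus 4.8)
The plan is to show that the sequence $\{f(X_t)\}$ is monotonically nonincreasing (from Proposition~\ref{pr:descent}), so all iterates stay inside the sublevel set $S = \{X : f(X) \le f(X_0)\}$, and then argue that this sublevel set, intersected with the effective domain of the iterates, is bounded — which forces $\FN{X_t - X_*}$ to be uniformly bounded. Concretely, first I would note that Proposition~\ref{pr:descent} (together with the warm-start argument showing $f(X_t)\le f(\bar X_t)$) gives $f(X_t) \le f(X_{t-1}) \le \cdots \le f(X_0)$ for all $t$.

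Next, I would invoke coercivity of $f$. Since $f$ is convex and (in the matrix-completion / link-prediction / typical applications) has a minimizer, and more to the point is $L$-Lipschitz smooth with a genuine minimum, the sublevel set $S$ is closed and convex; if $f$ is additionally coercive (which holds for the losses considered, or can be assumed without loss of generality since otherwise one restricts to the affine hull spanned by the finitely many rank-one updates), $S$ is bounded. Hence there is a constant $D$ with $\FN{X} \le D$ for all $X \in S$. Since every $X_t \in S$ and $X_*$ is the minimizer (so $X_* \in S$ as well), the triangle inequality yields $\FN{X_t - X_*} \le \FN{X_t} + \FN{X_*} \le 2D =: d_2$, and taking the max over $t = 1,\dots,T$ gives the claim.

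An alternative, perhaps cleaner route that avoids a global coercivity assumption: I would use that each $X_t$ lies in the span of at most $t \le T$ rank-one matrices $u_i v_i^{\top}$, and that the coefficients are chosen to minimize $f$ over this (at most $T$-dimensional, hence finite-dimensional) subspace. Restricted to any such subspace, a convex function with $f(X_t)$ bounded above by $f(X_0)$ and bounded below by $f(X_*)$ has its argument confined to a bounded set provided the restriction is not affine along any direction — and Lipschitz smoothness plus strict decrease of the objective (unless $\nabla f(X_{t-1}) = 0$, in which case we have already converged) rules out the degenerate case. One then picks $d_2$ to be the largest such radius over the finitely many iterations.

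The main obstacle I anticipate is that, strictly speaking, without an explicit coercivity or boundedness hypothesis on $f$, the quantity $d_2 = \max_{t=1}^T \FN{X_t - X_*}$ is a finite maximum of finitely many finite numbers and is therefore \emph{trivially} well-defined and finite for each fixed $T$ — so the real content the authors likely want is simply to \emph{name} this bound so it can appear in the statement of Theorem~\ref{the:rate:smtwc}. Thus the cleanest proof is essentially the observation that $T$ is finite and each $\FN{X_t - X_*}$ is finite; the coercivity discussion is only needed if one wants $d_2$ to be uniform in $T$. I would present the short finiteness argument as the proof proper and remark that under the standard coercivity assumptions on the loss the bound $d_2$ can be taken independent of $T$.
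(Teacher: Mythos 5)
Your fallback argument (for fixed $T$, a maximum of finitely many finite numbers is finite) proves only the literal statement with a constant $d_2$ that may grow with $T$, and that is not what the proposition is for: $d_2$ enters the bound of Theorem~\ref{the:rate:smtwc}, and the claimed $O(1/T)$ rate is meaningful only if $d_2$ can be taken independent of $T$, i.e., only if the whole sequence $\{X_t\}$ is bounded. Your main route obtains such a uniform bound only by assuming coercivity of $f$, an assumption the paper never makes and which in fact fails for its flagship smooth example (the logistic loss in the link-prediction problem has unbounded sublevel sets), while your ``restrict to the span of the rank-one updates'' alternative does not repair this: a convex function bounded above and below on a finite-dimensional subspace can still have unbounded sublevel sets there (flat or asymptotically flat directions), and Lipschitz smoothness together with strict decrease of the objective does not exclude that degeneracy.

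The paper argues along a different line that you never touch: it sets $A_t = f(X_t)-f(X_*)$, uses Proposition~\ref{pr:descent} to get $A_{t-1}-A_t \ge \frac{d_1^2}{2L}\FN{\nabla f(X_{t-1})}^2$, and sums over $t$ to conclude $\sum_{t=1}^{T}\FN{\nabla f(X_{t-1})}^2 \le \frac{2L}{d_1^2}A_0$ for every $T$; from this summability it infers $\nabla f(X_t)\to 0$, hence (it asserts) convergence of the iterates to $X_*$, hence a bound $\FN{X_t-X_*}\le d_2$ uniform in $T$. The gradient-summability consequence of Proposition~\ref{pr:descent} is the key ingredient missing from your proposal. (To be fair, the paper's own step from vanishing gradients to convergence of $X_t$ to $X_*$ is not airtight for a merely convex $f$ either, but that is the intended argument, and your proposal neither follows it nor supplies a substitute that yields a $T$-uniform bound under the paper's stated assumptions.)
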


\begin{proof}
	Let $A_t = f(X_t) - f(X_*)$,
	from Proposition~\ref{pr:descent}
	\begin{align}
	A_{t - 1} - A_t \ge \frac{d_1^2}{2 L} \FN{\nabla f(X_{t - 1})}^2.
	\label{eq:temp6}
	\end{align}
	
	Summing (\ref{eq:temp6}) from $t = 1$ to $T$, then
	\begin{align*}
	\sum_{t = 1}^T \frac{d_1^2}{2 L} \FN{\nabla f(X_{t-1})}^2 
	& \le \sum_{t = 1}^{T} \left( A_{t - 1} - A_t \right) \\
	& = A_0 - A_T \le A_0
	\end{align*}
	
	Since $f(X)$ is lower bounded, thus on $T = + \infty$, 
	we must have $\nabla f(X_{t - 1}) = 0$, 
	i.e. $X_t$ is a convergent sequence to $X_*$ and will not diverse.
	As a result, there must exist a constant $d_2$ such that $\FN{X_t - X_*} \le d_2$.
\end{proof}

Now, we start to prove Theorem~\ref{the:rate:smtwc}.
\begin{proof}
	From convexity, and since $f(X_*)$ is the minimum,
	therefore, we have
	\begin{align*}
	\left\langle X_{t - 1} - X_*, \nabla f(X_{t - 1}) \right\rangle 
	\ge f(X_{t - 1}) - f(X_*) \ge 0.
	\end{align*}
	
	Next, since $\nabla f(X_*) = 0$ and use Cauchy inequality $\left\langle X, Y \right\rangle \le \FN{X} \FN{Y}$, then
	\begin{align}
	& f(X_{ t - 1}) - f(X_*) 
	\notag \\
	& \quad \le \left\langle X_{t - 1} - X_*, \nabla f(X_{t - 1}) - \nabla f(X_*) \right\rangle
	\notag \\
	& \quad \le \FN{X_{t - 1} - X_*} \FN{\nabla f(X_{t - 1}) - \nabla f(X_*)}
	\label{eq:temp7}
	\end{align}
	
	Then, from Proposition~\ref{pr:upbndx},
	there exist a $d_2$ such that $\FN{X_{t - 1} - X_*} \le d_2$,
	combining it with (\ref{eq:temp7}), we have:
	\begin{align*}
	\FN{\nabla f(X_{t - 1}) - \nabla f(X_*)} \ge \frac{1}{d_2} \left[ f(X_{t - 1}) - f(X_*) \right] 
	\end{align*}
	
	Combine above inequality with Proposition~\ref{pr:descent},
	\begin{align*}
	A_t
	& \le A_{t - 1} - \frac{d_1^2}{2 L} \FN{\nabla f(X_{t - 1}) - \nabla f(X_*)}^2 \\
	& \le A_{t - 1} - \frac{d_1^2}{2 L d_2^2} A_{t-1}^2.
	\end{align*}
	
	Therefore, by Lemma~B.2 of \cite{shalev2010trading},
	the above sequence converges to $0$ of rate
	\begin{align*}
	f(X_T) - f(X_*) \le \frac{2 L d_2^2 A_0}{d_1^2 T A_0 + 2 L d_2^2}.
	\end{align*}
	which proves the Theorem.
\end{proof}


\section*{Theorem~\ref{the:rate:matcomp}}

%
%
%
%
\begin{proof}
	When restrict to $\Omega$ in \eqref{eq:mc}:
	\begin{align*}
		\ell(X) \le \; \ell(Y) + \left\langle \SO{X - Y}, \nabla \ell(Y) \right\rangle 
		+ \frac{1}{2} \FN{\SO{X - Y}}^2.
	\end{align*}
	
	On optimal $\nabla \ell(X_*) = 0$, using above inequality, then
	\begin{align*}
		\ell(X_{t-1}) \le \ell(X_*) + \frac{1}{2} \FN{\nabla \ell(X_{t-1}) - \nabla \ell(X_*)}^2
	\end{align*}
	
	As a result, we get 
	\begin{align}
		\FN{\nabla \ell(X_{t-1}) - \nabla \ell(X_*)}^2 \ge 2 \left[ \ell(X_{t-1}) - \ell(X_*)\right] 
		\label{eq:temp13}
	\end{align}
	
	Since $\ell$ is $1$-Lipschitz smooth and $\nabla \ell(X_*) = 0$, 
	together with Proposition~\ref{pr:descent} and (\ref{eq:temp13}), then
	\begin{align*}
		\ell(X_t) - \ell(X_*) 
		& \le \left[ \ell(X_{t-1}) - \ell(X_*) \right] - \frac{d_1^2}{2} \FN{\nabla \ell(X_{t - 1})}^2, \\
		& \le \left(1 - d_1^2\right) \left[ \ell(X_{t-1}) - \ell(X_*) \right].
	\end{align*}
	
	Induct from $t = 1$ to $t = T$, we then have
	\begin{align*}
		\ell(X_T) - \ell(X_*) 
		\le \left(1 - d_1^2 \right)^T \left[ \ell(X_0) - \ell(X_*) \right].
	\end{align*}
	
	Thus, we get the Theorem,
	and linear rate exists.
\end{proof}

%
%
%


\section*{Theorem~\ref{the:rate:gs}}
\label{app:the:rate:gs}

\noindent
Proof follows Theorem 5 at \cite{grubb2011generalized}.

\vspace{-20px}

\begin{proof}
\begin{eqnarray*}
\lefteqn{\FN{X_t - X_*}^2} \\
& = & \FN{X_{t - 1} - X_* - \eta_t h_t }^2 \\
& = & \FN{X_{t - 1} - X_*}^2 + \eta_t^2 \FN{h_t}^2 - 2 \eta_t \left\langle h_t, X_{t - 1} - X_* \right\rangle \\
& = & \FN{X_{t - 1} - X_*}^2 - 2 \eta_t \left\langle g_t, X_{t - 1} - X_* \right\rangle  \\
& & - 2 \eta_t \left\langle h_t - g_t, X_{t - 1} - X_* \right\rangle + \eta_t^2 \FN{h_t}^2.
\end{eqnarray*}

Rearranging items, we have:
\begin{eqnarray}
\lefteqn{\left\langle g_t, X_{t - 1} - X_* \right\rangle} \nonumber \\
& = & \frac{1}{2 \eta_t }\FN{X_{t - 1} - X_*}^2 - \frac{1}{2 \eta_t } \FN{X_t - X_*}^2
\nonumber \\
&& + \frac{\eta_t}{2} \FN{h_t}^2 - \left\langle 
h_t - g_t, 
X_{t - 1} - X_*
\right\rangle.
\label{eq:temp9}
\end{eqnarray}

As $f$ is $\mu$-strongly convex, 
\begin{eqnarray}
f(X_*) & \ge & f(X_{t - 1}) + \left\langle 
g_t, X_* - X_{t - 1}
\right\rangle 
\nonumber \\
&& + \frac{\mu}{2} \FN{X_{t - 1} - X_*}^2.  \label{eq:temp8} 
\end{eqnarray}

Sum \eqref{eq:temp8} from $t = 1$ to $T$, and using (\ref{eq:temp9})	
\begin{eqnarray}
\lefteqn{\sum_{t = 1}^T f(X_*) }\nonumber \\
& \ge & \sum_{t = 1}^T f(X_{t - 1})
+ \left\langle 
g_t, 
X_* - X_{t - 1}
\right\rangle
+ \frac{\mu}{2} \FN{X_{t - 1} - X_*}^2
\nonumber \\
& \ge & \sum_{t = 1}^T \left\lbrace f(X_{t - 1}) 
	- \frac{\eta_t}{2} \FN{h_t}^2
	- \left\langle X_* - X_{t - 1}, h_t - g_t \right\rangle \right\rbrace 
	\nonumber \\
& & + \frac{1}{2} \sum_{t = 1}^{T - 1} (\frac{1}{\eta_t} - \frac{1}{\eta_{t + 1}} + \mu) \FN{X_t - X_*}^2
	\nonumber \\
& & + \frac{1}{2} (\mu - \frac{1}{\eta_1}) \FN{X_0 - X_*}^2 
	+ \frac{1}{2 \eta_T} \FN{X_T - X_*}
\nonumber \\
& \ge & \sum_{t = 1}^T \left\lbrace f(X_{t - 1}) 
- \frac{\eta_t}{2} \FN{h_t}^2
- \left\langle X_* - X_{t - 1}, h_t - g_t \right\rangle \right\rbrace 
\nonumber \\
& &+ \frac{1}{2} \sum_{t = 0}^{T - 1} (\frac{1}{\eta_t} - \frac{1}{\eta_{t + 1}} + \mu) \FN{X_t - X_*}^2 
\nonumber\\
& &- \frac{1}{2\eta_0} \FN{X_0 - X_*}^2
\label{eq:temp10}
\end{eqnarray}
	
Recall, the step size is $\eta_t = c_1/t$,
then (\ref{eq:temp10}) becomes
\begin{eqnarray}
\lefteqn{\sum_{t = 1}^T f(X_*)}
\notag \\
&\ge &  \sum_{t = 1}^T f(X_{t - 1}) -
\frac{c_1}{2}\sum_{t = 1}^T \frac{1}{t} \FN{h_t}^2 
\notag \\
&  &+ \frac{1}{2}\left(\mu - \frac{1}{c_1}\right)
\sum_{t = 1}^{T} \FN{X_{t - 1} - X_*}^2
- \frac{1}{2c_1} \FN{X_0 - X_*}^2 
\notag \\
& & - \sum_{t = 1}^{T} \left\langle X_* - X_{t - 1}, h_t - g_t \right\rangle
\label{eq:temp11} 
\end{eqnarray}
where $\eta_0$ is can be picked up as $c_1$. 
For the second term in (\ref{eq:temp11}), it is simply bounded as
\begin{align}
\frac{c_1}{2}\sum_{t = 1}^T \frac{\FN{h_t}^2}{t} 
& \le \frac{c_1}{2}\sum_{t = 1}^T \frac{b_1^2}{t} 
\notag \\
& \le \frac{c_1 b_1^2}{2} \left( 1 + \log(T) \right).
\label{eq:temp15}
\end{align}

Let $\hat{c} = \frac{1}{2}\left(\mu - \frac{1}{c_1}\right)$,
since $c_1 \ge \frac{1}{\mu}$, thus $\hat{c} \ge 0$.
For last term in (\ref{eq:temp11}), 
we use $\FN{A}^2 + 2 \left\langle A, B \right\rangle  \ge - \FN{B}^2$,
let $A = X_t - X_*$ and $B = \frac{1}{2 \hat{c}}(h_t - g_t)$,
then
\begin{align}
& \hat{c} \sum_{t = 1}^T \left(  \FN{X_t - X_*}^2 
+ \frac{1}{\hat{c}} \left\langle X_t - X_*, h_t - g_t \right\rangle \right)
\notag \\
& \ge - \hat{c} \sum_{t = 1}^T \FN{\frac{1}{2 \hat{c}}(h_t - g_t)}^2
= -\frac{1}{4 \hat{c}} \sum_{t = 1}^T \FN{h_t - g_t}^2
\label{eq:temp14}
\end{align}

By definition of $h_t$ and assumption $\max_{t=1}^T \FN{g_t} \le b_1$,
for the first iteration ($t = 1$)
\begin{align*}
\FN{h_1 - g_1}^2
& = \FN{s_1 (u_1 v_1^{\top}) + g_1}^2 \\
& = (s_1)^2 + 2 s_1 \left\langle u_1 v_1^{\top}, g_1 \right\rangle + \FN{g_1}^2 \\
& \le 4 \FN{g_1}^2 = 4 b_1^2.
\end{align*}

In Algorithm~\ref{alg:nonsmooth}, we ensure
\begin{align*}
\FN{h_t - g_t}^2 \le \nu \FN{h_{t - 1} - g_{t - 1}}^2
\end{align*}
thus (\ref{eq:temp14}) becomes:
\begin{align}
\sum_{t = 1}^T \FN{h_t - g_t}^2 
& \le 4 \sum_{t = 1}^T \nu^{t - 1} b_1^2
\notag \\
& \le 4 \sum_{t = 1}^{+ \infty} \nu^{t - 1} b_1^2
= \frac{4 b_1^2}{1 - \nu}.
\label{eq:temp16}
\end{align}

Now, using (\ref{eq:temp15}) and (\ref{eq:temp16}), we can bound (\ref{eq:temp11}) as
\begin{align*}
\sum_{t = 1}^T f(X_*) 
& \ge \sum_{t = 1}^T f(X_{t - 1}) - \frac{c_1 b_1^2}{2} \left( 1 + \log(T) \right) \\
& - \frac{1}{2c_1} \FN{X_0 - X_*}^2
- \frac{b_1^2}{(1 - \nu)\hat{c}}.
\end{align*}

Thus, we can get convergence rate as
\begin{eqnarray*}
\lefteqn{\min_{t = 0, \cdots, T - 1} \left[ f(X_t) -  f(X_*) \right] T}
\\
& & \le \sum_{t = 1}^T \left[ f(X_{t - 1}) -  f(X_*) \right] 
\\
& & \le \left( 1 + \log(T) \right) \frac{c_1 b_1^2}{2} + \frac{1}{2 c_1} \FN{X_0 - X_*}^2 
+ \frac{b_1^2}{(1 - \nu)\hat{c}}.
\end{eqnarray*}

Finally, note that
\begin{eqnarray}
\min_{t = 0, \cdots, T} f(X_t) -  f(X_*)
\le \min_{t = 0, \cdots, T - 1} f(X_t) -  f(X_*)
\label{eq:temp4}
\end{eqnarray}

Let $b_2 = \frac{1}{2 c_1} \FN{X_0 - X_*}^2 + \frac{b_1^2}{(1 - \nu)\hat{c}} $, we get the theorem.
\end{proof}


\section*{Theorem~\ref{the:rate:gc}}
\label{app:the:rate:gc}

\noindent
Proof follows Theorem 6 at \cite{grubb2011generalized}.
First, we introduce below Proposition

\begin{proposition}[\cite{beardon1996sums}]
	\label{pr:sumpower}
	Given $k > 0$, the approximation of sum over natural numbers is:
	\begin{align*}
	\sum_{t = 1}^T t^k \le \frac{(T + 0.5)^{k + 1}}{k + 1}.
	\end{align*}
\end{proposition}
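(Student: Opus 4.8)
The plan is to prove the inequality by comparing the sum to the integral of $x^k$ over intervals \emph{centered} at the integers, since the offset $T+0.5$ is precisely the midpoint-rule boundary. First I would rewrite the right-hand side by telescoping: using $\int_{t-1/2}^{t+1/2} x^k\,dx = \frac{(t+1/2)^{k+1}-(t-1/2)^{k+1}}{k+1}$, the sum of these integrals over $t=1,\dots,T$ collapses, so that
\begin{equation*}
\frac{(T+0.5)^{k+1}}{k+1} = \frac{(1/2)^{k+1}}{k+1} + \sum_{t=1}^T \int_{t-1/2}^{t+1/2} x^k\,dx .
\end{equation*}
Hence the claim is equivalent to the single bound
\begin{equation*}
\sum_{t=1}^T E_t \le \frac{(1/2)^{k+1}}{k+1}, \qquad E_t := t^k - \int_{t-1/2}^{t+1/2} x^k\,dx ,
\end{equation*}
where $E_t$ is exactly the midpoint-rule error on $[t-1/2,t+1/2]$.

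The easy case is $k\ge 1$, where $x^k$ is convex: the midpoint rule then underestimates the integral, so $E_t\le 0$ for every $t$, the whole sum is nonpositive, and the bound holds trivially (the boundary slack $(1/2)^{k+1}/(k+1)$ is not even needed). The real work is the concave range $0<k<1$, which is also the regime actually used in Theorem~\ref{the:rate:gc} (e.g.\ $k=1/2$). Here $x^k$ is concave, so each $E_t$ is strictly positive and no term-by-term comparison against the integral can succeed; the inequality must instead be powered by the leftover boundary term. To control $\sum_t E_t$ I would invoke the second-order (Peano) remainder of the midpoint rule: for some $\xi_t\in(t-1/2,t+1/2)$,
\begin{equation*}
E_t = -\frac{1}{24}\,\frac{d^2}{dx^2}\!\left(x^k\right)\Big|_{x=\xi_t} = \frac{k(1-k)}{24}\,\xi_t^{k-2} \le \frac{k(1-k)}{24}\,(t-1/2)^{k-2},
\end{equation*}
the last step using $k-2<0$, so $\xi_t^{k-2}$ is largest at the left endpoint.

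It then remains to sum this bound and check it fits under the budget. Since $(t-1/2)^{k-2}$ is decreasing, the integral test gives $\sum_{t\ge1}(t-1/2)^{k-2} \le (1/2)^{k-2} + \int_1^\infty (x-1/2)^{k-2}\,dx = 2^{2-k} + \frac{2^{1-k}}{1-k}$, whence $\sum_{t=1}^T E_t \le \frac{k(1-k)}{24}\bigl(2^{2-k}+\tfrac{2^{1-k}}{1-k}\bigr) = 2^{1-k}\,\frac{k(3-2k)}{24}$. Comparing this with the target $\frac{(1/2)^{k+1}}{k+1}=2^{-k-1}/(k+1)$ reduces everything to the elementary one-variable inequality $k(3-2k)(k+1)\le 6$ on $(0,1)$, which is immediate from the crude bounds $3-2k<3$ and $k+1<2$. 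I expect the main obstacle to be precisely this concave case: establishing the midpoint remainder identity rigorously and verifying that the accumulated error stays below the boundary slack, rather than the convex case which is a single line. Combining the two cases yields $\sum_{t=1}^T E_t \le (1/2)^{k+1}/(k+1)$, and hence the Proposition for all $k>0$.
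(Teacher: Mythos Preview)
Your argument is correct. The paper itself does not supply a proof of Proposition~\ref{pr:sumpower}; it merely imports the inequality from \cite{beardon1996sums} and uses it as a black box inside the proof of Theorem~\ref{the:rate:gc}. So there is nothing in the paper to compare your proof against, and your midpoint-rule approach stands on its own. The telescoping identity, the convex case $k\ge 1$, the Peano remainder bound for $0<k<1$, the integral-test summation, and the final elementary inequality $k(3-2k)(k+1)<6$ on $(0,1)$ all check out; the only mild technicality is that the second-derivative error formula for the midpoint rule needs $x^k\in C^2$ on each interval $[t-1/2,t+1/2]$, which holds since the leftmost interval starts at $1/2>0$.

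One small inaccuracy in your commentary, not in your mathematics: you remark that Theorem~\ref{the:rate:gc} uses the proposition with $k=1/2$. In fact the sum appearing there is $\sum_{t=1}^T t^{-1/2}$, i.e.\ exponent $-1/2$, which is outside the stated range $k>0$. That is an inconsistency in the paper's own usage (and the constant they obtain, $\sum_{t\le T} t^{-1/2}\le\sqrt{T}$, is off by roughly a factor of two anyway), but it has no bearing on the validity of your proof of the proposition as stated.
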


Now, we start to prove Theorem~\ref{the:rate:gc}.

\begin{proof}
	For weak convex convexity (obtained from \eqref{eq:temp10} with $\mu = 0$), we have
	\begin{align}
	\sum_{t = 1}^T f(X_*) 
	& \ge \sum_{t = 1}^T f(X_{t - 1}) - \frac{1}{2\eta_0} \FN{X_0 - X_*}^2
	\label{eq:temp17} \\
	& \quad  + \frac{1}{2} \sum_{t = 0}^{T - 1} \left(\frac{1}{\eta_t} - \frac{1}{\eta_{t+1}}\right) \FN{X_t - X_*}^2
	\notag \\
	& \quad -\sum_{t = 1}^T \frac{\eta_t}{2} \FN{h_t}^2 
	- \sum_{t = 1}^T \left\langle X_* - X_{t - 1}, h_t - g_t \right\rangle 
	\notag
	\end{align}
	where $\eta_0$ is picked up at $c_2$.
	The step size is $\eta_t = c_2/\sqrt{t}$, use it back into (\ref{eq:temp17}), we get
	\begin{eqnarray}
	\lefteqn{\sum_{t = 1}^T f(X_*)}
	\notag \\
	& \ge & \sum_{t = 1}^T f(X_{t - 1}) - \frac{1}{2 c_2} \FN{X_0 - X_*}^2
	\notag \\
	& & + \frac{c_2}{2} \sum_{t = 1}^{T - 1} \left(\frac{1}{\sqrt{t}} 
	- \frac{1}{\sqrt{t + 1}}\right) \FN{X_t - X_*}^2
	\notag \\
	& &- \sum_{t = 1}^T \frac{c_2}{2\sqrt{t}} \FN{h_t}^2 - \sum_{t = 1}^T \left\langle X_* - X_{t - 1}, h_t - g_t \right\rangle
	\notag \\
	& \ge & \sum_{t = 1}^T f(X_{t - 1}) - 
	\frac{c_2}{2} \sum_{t = 1}^{T}\frac{1}{\sqrt{t}} \FN{X_t - X_*}^2
	\notag \\
	& & - \sum_{t = 1}^T \frac{c_2}{2\sqrt{t}} 
	\FN{h_t}^2 - \sum_{t = 1}^T \left\langle X_* - X_{t - 1}, h_t - g_t \right\rangle
	\notag \\
	& \ge & \sum_{t = 1}^T f(X_{t - 1}) 
	- \frac{c_2}{2} \sum_{t = 1}^{T}\frac{1}{\sqrt{t}} \FN{X_t - X_*}^2
	\label{eq:temp18} \\
	& & - \frac{c_2}{2} \sum_{t = 1}^T \frac{1}{\sqrt{t}} \FN{h_t}^2 - \sum_{t = 1}^T \FN{X_* - X_{t - 1}} \FN{h_t - g_t}
	\notag
	\end{eqnarray}
	where the second inequality comes from
	the fact $1/\sqrt{t} - 1/\sqrt{t + 1} \ge - 1/\sqrt{t}$.
	For the last term in (\ref{eq:temp18}), since $\FN{X_* - X_{t - 1}} \le b_3$
	and $\FN{h_1 - g_1} \le 2 b_1$,	
	it can be bounded as
	\begin{align}
	\sum_{t = 1}^T & \FN{X_* - X_{t - 1}} \FN{h_t - g_t}
	\notag \\ 
	& \le b_3 \sum_{t = 1}^T \FN{h_t - g_t} 
	\le b_3 \sum_{t = 1}^T (\sqrt{\nu})^{t - 1} \FN{h_1 - g_1}
	\notag \\
	& \le 2 b_1 b_3 \sum_{t = 1}^{+ \infty} (\sqrt{\nu})^{t - 1}
	= \frac{2 b_1 b_3}{1 - \sqrt{\nu}}
	\label{eq:temp19}
	\end{align}
	
	Then, note that $\FN{h_t} \le \FN{g_t} \le b_1$, 
	use (\ref{eq:temp18}) and (\ref{eq:temp19})
	\begin{align*}
	\sum_{t = 1}^T f(X_*) 
	& \ge \sum_{t = 1}^T f(X_{t - 1}) - \frac{c_2(b_3^2 + b_1^2)}{2} \sum_{t = 1}^{T}\frac{1}{\sqrt{t}} - \frac{2 b_1 b_3}{1 - \sqrt{\nu}}
	\end{align*}
	
	Rearrange items in above inequality, from Proposition~\ref{pr:sumpower}:
	\begin{align*}
	\min_{t = 0, \cdots, T - 1} & \left[ f(X_t) - f(X_*) \right] \\
	& \le \frac{1}{T} \sum_{t = 1}^T \left[ f(X_{t-1}) - f(X_*) \right] \\
	& \le \frac{c_2(b_1^2 + b_3^2)}{2 T} \sum_{t = 1}^{T}\frac{1}{\sqrt{t}}
	 - \frac{2 b_1 b_3}{(1 - \sqrt{\nu}) T} \\
	& \le \frac{c_2(b_1^2 + b_3^2)}{2 \sqrt{T}} - \frac{2 b_1 b_3}{(1 - \sqrt{\nu}) T} 
	\end{align*}
	
	Finally, using \eqref{eq:temp4} we get the Theorem.
\end{proof}

\end{document}